\newcommand*{\horzbar}{\rule[.5ex]{2.5ex}{0.5pt}}
\def\var{\textnormal{var}}
\def\unif{\mbox{\rm\scriptsize unif}}
\def\ind{{\bf 1}}
\def\R{{\mathbb{R}}}
\def\pr{{\rm Pr}}
\def\E{{\mathbb E}}
\def\X{{\mathcal X}}
\def\Y{{\mathcal Y}}
\def\Z{{\mathcal Z}}
\def\A{{\mathcal A}}
\def\F{{\mathcal F}}
\def\G{{\mathcal G}}
\def\Q{{\mathcal Q}}
\def\N{{\mathcal N}}
\def\NN{{\mathbb N}}
\newtheorem{thm}{Theorem}
\newtheorem{lemma}[thm]{Lemma}
\newtheorem{assump}{Assumption}
\DeclareMathOperator*{\argmin}{arg\,min}
\title{Structural query-by-committee}
\author{
Sanjoy Dasgupta \\
 University of California, San Diego \\
{\tt{dasgupta@cs.ucsd.edu}}
\and
Christopher Tosh \\
 University of California, San Diego \\
{\tt{ctosh@cs.ucsd.edu}}
}
\begin{document}

\maketitle

\begin{abstract}
In this work, we describe a framework that unifies many different interactive learning tasks. We present a generalization of the {\it query-by-committee} active learning algorithm for this setting, and we study its consistency and rate of convergence, both theoretically and empirically, with and without noise.
\end{abstract}

\section{Introduction}

We introduce {\it interactive structure learning}, an abstract problem that encompasses many interactive learning tasks that have traditionally been studied in isolation, including active learning of binary classifiers, interactive clustering, interactive embedding, and active learning of structured output predictors. These problems include variants of both supervised and unsupervised tasks, and allow many different types of feedback, from binary labels to must-link/cannot-link constraints to similarity assessments to structured outputs. Despite these surface differences, they conform to a common template that allows them to be fruitfully unified.

In interactive structure learning, there is a space of items $\X$---for instance, an input space on which a classifier is to be learned, or points to cluster, or points to embed in a metric space---and the goal is to learn a {\it structure} on $\X$, chosen from a family $\G$. This set $\G$ could consist, for example, of all linear classifiers on $\X$, or all hierarchical clusterings of $\X$, or all knowledge graphs on $\X$. There is a target structure $g^* \in \G$ and the hope is to get close to this target. This is achieved by combining a loss function or prior on $\G$ with interactive feedback from an expert.

We allow this interaction to be fairly general. In most interactive learning work, the dominant paradigm has been {\it question-answering}: the learner asks a question (like ``what is the label of this point $x$?'') and the expert provides the answer. We allow a more flexible protocol in which the learner provides a constant-sized {\it snapshot} of its current structure and asks whether it is correct (``does the clustering, restricted to these ten points, look right?''). If the snapshot is correct, the expert accepts it; otherwise, the expert fixes some part of it. This type of feedback, first studied in generality in \cite{DL17}, can be called {\it partial correction}. It is a strict generalization of question-answering, and as we explain in more detail below, it allows more intuitive interactions in many scenarios.

In Section~\ref{sec:sqbc}, we present {\it structural query-by-committee}, a simple and general-purpose algorithm that can be used for any instance of interactive structure learning. It is a generalization of the well-known query-by-committee (QBC) algorithm \cite{SOS92,FSST97}, and operates, roughly, by maintaining a posterior distribution over structures and soliciting feedback on snapshots on which there is high uncertainty. We also introduce an adaptation of the algorithm that allows convex loss functions to handle the noise. This helps computational complexity in some practical settings, most notably when $\G$ consists of linear functions, and as we show in Section~\ref{sec:kernels}, also makes it possible to kernelize structural QBC.

In Section~\ref{sec:consistency}, we show that structural QBC is guaranteed to converge to the target $g^*$, even when the expert's feedback is noisy---and in the appendix, we give rates of convergence in terms of a {\it shrinkage} coefficient. In Section~\ref{sec:expts}, we describe experiments using structural QBC for a variety of interactive learning tasks. We end, in Section~\ref{sec:related}, with an overview of related work.

\section{Interactive structure learning}
\label{sec:structure-learning}
The space of possible interactive learning schemes is large and mostly unexplored. We can get a sense of its diversity from a few examples. In {\it active learning}~\cite{S12}, for instance, the goal is to learn a classifier starting from a pool of unlabeled data. The machine adaptively decides which points it wants labeled, and an expert answers these queries as they arise. By focusing on informative points, the machine can often learn a good classifier using far fewer labels than would be needed in a passive setting.

Sometimes, the labels are complex structured objects, such as parse trees for sentences or segmentations of images. In such cases, providing an entire label is time-consuming, and it is easier if the machine simply suggests a label (such as a tree) and lets the expert either accept it or correct some particularly glaring fault in it. We can think of this as interaction with {\it partial correction}. It is more general than the {\it question-answering} usually assumed in active learning, and more convenient in many settings.

Interaction can also be used to augment {\it unsupervised} learning. Despite great improvements in algorithms for clustering, topic modeling, and so on, the outputs of these procedures are rarely perfectly aligned with the user's needs. Complex high-dimensional data can be organized in many different ways: should a collection of animals be clustered according to the Linnaean taxonomy, or their preferred habitats, or how cute they are? These alternatives are all legitimate, and it is impossible for an unsupervised method to magically guess what the user wants. But a modest amount of interaction can potentially overcome this problem of underspecification. For instance, the user can iteratively provide {\tt must-link} and {\tt cannot-link} constraints~\cite{WC00} to edit a flat clustering, or {\it triplet} constraints to edit a hierarchy~\cite{VD16}.

These are just a few examples of the many types of interactive learning that have been investigated. The underlying tasks encompass problems of both supervised and unsupervised learning. The types of feedback range from triplets to partial labels to connectivity constraints. The querying strategies are also rich in variety. Our first goal is to provide a unifying framework in which this profusion of learning problems can be treated. 

\subsection{The space of structures}

Let $\X$ be a set of data points. This could be a pool of unlabeled data to be used for active learning, or a set of points to be clustered, or an instance space on which a metric will be learned, or items on which a knowledge graph is to be constructed.

We wish to learn a {\it structure} on $\X$, chosen from a class $\G$. This could, for instance, be the set of all labelings of $\X$ consistent with a function class $\F$ of classifiers (binary, multiclass, or with complex structured labels), or the set of all partitions of $\X$, or the set of all metrics on $\X$. Of these structures, there is some target $g^* \in \G$ that we wish to attain.

Although interaction will help choose a structure, it is unreasonable to expect that interaction alone could be an adequate basis for this choice. For instance, pinpointing a particular clustering over $n$ points requires $\Omega(n)$ must-link/cannot-link constraints, which is an excessive amount of interaction when $n$ is large.

To bridge this gap, we need a prior or a loss function over structures. For instance, if $\G$ consists of flat $k$-clusterings, then we may prefer clusterings with low $k$-means cost. If $\G$ consists of linear separators, then we may prefer functions with small norm $\|g\|$. In the absence of interaction, the machine would simply pick the structure that optimizes the prior or cost function. In this paper, we assume that this preference is encoded as a prior distribution $\pi$ over $\G$. 

We emphasize that although we have adopted a Bayesian formulation, there is no assumption that the target structure $g^*$ is actually drawn from the prior.

\subsection{Feedback}

We consider schemes in which each individual round of interaction is not expected to take too long. This means, for instance, that the expert cannot be shown an entire clustering, of unrestricted size, and asked to comment upon it. Instead, he or she can only be given a small {\it snapshot} of the clustering, such as its restriction to 10 elements. The feedback on this snapshot will be either be to accept it, or to provide some constraint that fixes part of it.

In order for this approach to work, it is essential that structures be {\it locally checkable}: that is, $g$ corresponds to the target $g^*$ if and only if every snapshot of $g$ is satisfactory. 

When $g$ is a clustering, for instance, the snapshots could be restrictions of $g$ to subsets $S \subseteq \X$ of some fixed size $s$. Technically, it is enough to take $s=2$, which corresponds to asking the user questions of the form `Do you agree with having {\tt zebra} and {\tt giraffe} in the same cluster?'' From the viewpoint of human-computer interaction, it might be preferable to use larger subsets (like $s=5$ or $s=10$), with questions such as ``Do you agree with the clustering $\{\mbox{\tt zebra}, \mbox{\tt giraffe}, \mbox{\tt dolphin}\}, \{\mbox{\tt whale}, \mbox{\tt seal}\}$?'' Larger substructures provide more context and are more likely to contain glaring faults that the user can easily fix ({\tt dolphin} and {\tt whale} must go together). In general, we can only expect the user to provide partial feedback in these cases, rather than fully correcting the substructure.

We now formalize the notion of a snapshot.

\subsection{Snapshots}

Perhaps the simplest type of snapshot of a structure $g$ is {\it the restriction of $g$ to a small number of points}. We start by discussing this case, and later present a more general definition.

\subsubsection{Projections}

For any $g \in \G$ and any subset $S \subseteq \X$ of size $s = O(1)$, let $g |_S$ be a suitable notion of the restriction of $g$ to $S$, which we will sometimes call the {\it projection} of $g$ onto $S$. For instance:
\begin{itemize}
\item $\G$ is a set of classifiers on $\X$.

Then we can take $s = 1$. For any point $x \in \X$, we let $g |_x$ be $(x,g(x))$.

\item $\G$ is a set of partitions (flat clusterings) of $\X$.

For a set $S \subseteq \X$ of size $s \geq 2$, let $g |_S$ be the induced partition on just the points $S$.

\item $\G$ is a set of hierarchical clusterings of $\X$.

For any $s \geq 3$, and any set $S \subseteq \X$ of size $s$, let $g |_S$ be the restriction of the hierarchical clustering $g$ to just the points $S$, that is, the induced hierarchy on $s$ leaves.

\item $\G$ is a set of metrics on $\X$.

For any $s \geq 2$ and any set $S \subseteq \X$ of size $s$, let $g |_S$ denote the $s \times s$ matrix of distances between points in $S$ according to metric $g$.
\end{itemize}
As discussed earlier, from a human-computer interaction point of view, it will often be helpful to pick projections of size larger than the minimal possible $s$. For clusterings, for instance, any $s \geq 2$ satisfies local checkability, but human feedback might be more effective when $s=10$ than when $s=2$.  Thus, in general, the queries made to the expert will consist of snapshots (projections of size $s = 10$, say) that can in turn be decomposed further into {\it atomic units} (projections of size 2).

\subsubsection{Atomic decompositions of structures}

Now we generalize the notion of projection to other types of snapshots and their atomic units.

We will take a functional view of the space of structures $\G$, in which each structure $g$ is specified by its ``answers'' to a set of {\it atomic questions} $\A$. For instance, if $\G$ is the set of partitions of $\X$, then we can take $\A = {\X \choose 2}$, with 
$$ g(\{x,x'\})  
= 
\left\{
\begin{array}{ll}
1 & \mbox{if $g$ places $x,x'$ in the same cluster} \\
0 & \mbox{otherwise}
\end{array} 
\right.
$$ 

The queries made during interaction can, in general, be composed of multiple atomic units, and feedback will be received on at least one of these atoms. Formally, let $\Q$ be the space of queries. In the partitioning example, this might be ${\X \choose 10}$. The relationship between $\Q$ and $\A$ is captured by the following requirements:
\begin{itemize}
\item Each $q \in \Q$ can be decomposed as a set of atomic questions $A(q) \subseteq \A$. In the partitioning example, $A(q)$ is the set of all pairs in $q$.
\item We will overload notation and write $g(q) = \{(a,g(a)): a \in A(q) \}$.
\item The user accepts $g(q)$ if and only if $g$ satisfactorily answers every atomic question in $q$, that is, if and only if $g(a) = g^*(a)$ for all $a \in A(q)$.
\end{itemize}

\noindent To illustrate this notation, we briefly turn to the example of hierarchical clustering.

\subsubsection{Example: hierarchical clustering}

Suppose $\G$ is the space of hierarchical clusterings of $\X$ and the user has in mind a target hierarchy $g^*$. 

A projection $g |_S$, the restriction of hierarchy $g$ to leaves $S$, is correct if and only if it agrees exactly with $g^* |_S$. We can define the atomic questions to be projections of size 3, that is, $\A = {\X \choose 3}$, and view any hierarchy $g$ as a function:
$$ g: \A \rightarrow \{\mbox{rooted trees with three leaves}\} .$$
Note that the hierarchy is fully specified by this function (to make this precise, we need to also fix some canonical ordering of the data points.) The right-hand set can be thought of as a set of possible labels, so that the learning problem resembles multiclass classification. There are four possible rooted trees with leaves $1,2,3$ and thus four labels: 

\begin{center}
\includegraphics[width=3in]{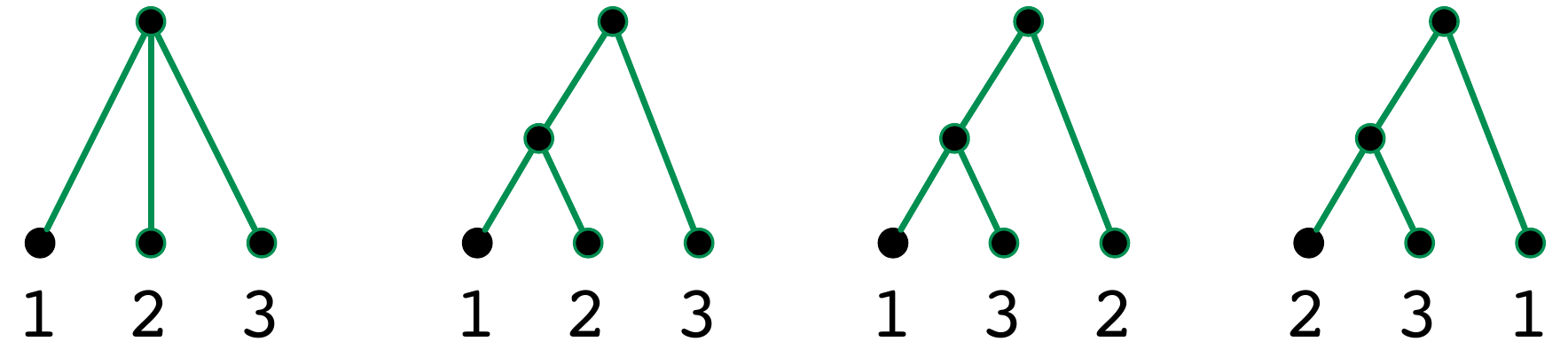}
\end{center}

The queries made by the machine can consist of larger projections, $\Q = {\X \choose s}$ for $s \geq 3$. Each such query $q$ decomposes naturally into its constituent atomic questions: $A(q) = \{a \in \A: a \subseteq q\}$. For instance, if $s = 6$ then $|A(q)| = {6 \choose 3} = 20$.

\subsection{Summary of framework}

%

To summarize, interactive structure learning has two key components:
\begin{itemize}
\item A reduction to multiclass classifier learning.

We view each structure $g \in \G$ as a function on atomic questions $\A$. Thus, learning a good structure is equivalent to picking one whose labels $g(a)$ are correct. 

\item Feedback by partial correction.

For practical reasons we consider broad queries, from a set $\Q$, where each query can be decomposed into atomic questions, allowing for partial corrections. This decomposition is given by the function $A: \Q \rightarrow 2^\A$.
\end{itemize}

The reduction to multiclass classification immediately suggests algorithms that can be used in the interactive setting. We are particular interested in {\it adaptive} querying, with the aim of finding a good structure with minimal interaction. Of the many schemes available for binary classifiers, one that appears to work well in practice and has good statistical properties is {\it query-by-committee}~\cite{SOS92,FSST97}. It is thus a natural candidate to generalize to the broader problem of structure learning.

\section{Structural QBC}
\label{sec:sqbc}

\begin{figure}
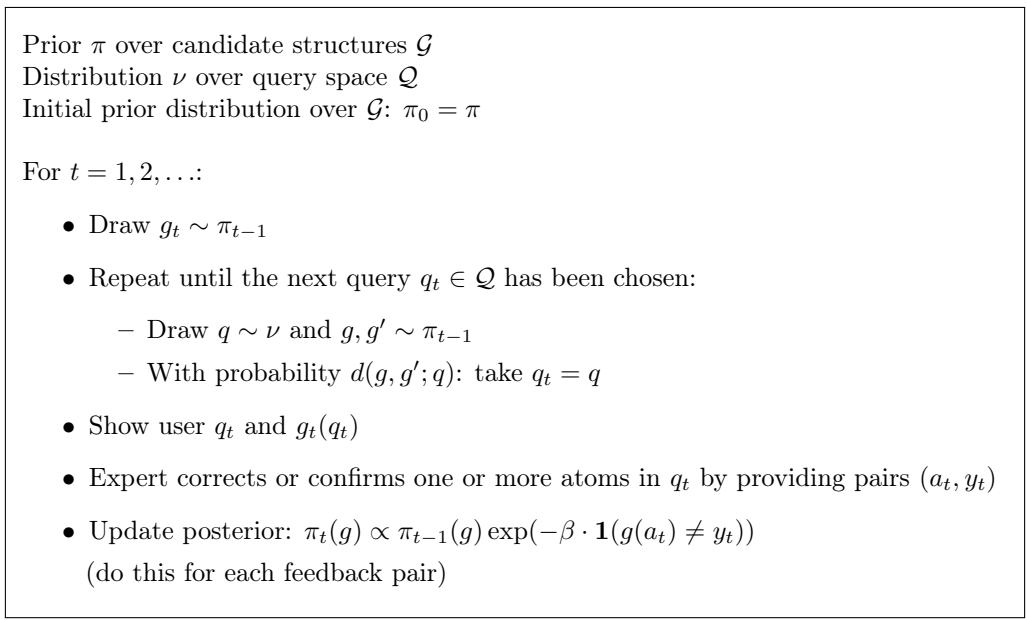

\begin{center}
\fbox{
\begin{minipage}[t]{5.2in}
\vskip.1in

Prior $\pi$ over candidate structures $\G$

Distribution $\nu$ over query space $\Q$

Initial prior distribution over $\G$: $\pi_0 = \pi$
\\

For $t = 1, 2, \ldots$:
\begin{itemize}
\item Draw $g_t \sim \pi_{t-1}$
\item Repeat until the next query $q_t \in \Q$ has been chosen:
\begin{itemize}
\item Draw $q \sim \nu$ and $g,g' \sim \pi_{t-1}$
\item With probability $d(g,g'; q)$: take $q_t = q$
\end{itemize}
\item Show user $q_t$ and $g_t(q_t)$
\item Expert corrects or confirms one or more atoms in $q_t$ by providing pairs $(a_t, y_t)$ 
\item Update posterior: $\pi_{t}(g) \propto \pi_{t-1}(g) \exp(- \beta \cdot {\bf 1}(g(a_t) \neq y_t))$ 

(do this for each feedback pair)
\end{itemize}
\vskip.1in
\end{minipage}}
\end{center}
\caption{Structural QBC for $0-1$ loss.}
\label{fig:structural-QBC}
\end{figure}

Query-by-committee, as originally analyzed in \cite{FSST97}, is an active learning algorithm for binary classification in the noiseless setting. It uses a prior probability distribution $\pi$ over its classifiers and keeps track of the current version space, i.e. the classifiers consistent with the labeled data seen so far. At any given time, the next query is chosen as follows:
\begin{itemize}
	\item Repeat:
	\begin{itemize}
		\item Pick $x \in \X$ at random (e.g. from a pool of unlabeled data)
		\item Pick classifiers $h,h'$ at random from $\pi$ restricted to the current version space
		\item If $h(x) \neq h'(x)$: halt and take $x$ as the point to query
	\end{itemize}
\end{itemize}
In our setting, the feedback at time $t$ is the answer $y_t$ to some atomic question $a_t \in \A$, and we can define the resulting version space to be $\{g \in \G: g(a_{t'}) = y_{t'} \mbox{\ for all $t' \leq t$}\}$. The immediate generalization of QBC would involve picking a query $q \in \Q$ at random, and then choosing it if $g,g'$ sampled from $\pi$ restricted to our version space happen to disagree on it. But this is unlikely to work well, because the answers to queries are no longer binary labels but mini-structures. As a result, $g,g'$ are likely to disagree on minor details even when the version space is quite small, leading to excessive querying. To address this, we will use a more refined notion of the difference between $g(q)$ and $g'(q)$:
\[ d(g,g'; q) \ = \ \frac{1}{|A(q)|} \sum_{a \in A(q)} \ind[g(a) \neq g'(a)]. \]
In words, this is the fraction of atomic subquestions of $q$ on which $g$ and $g'$ disagree. It is a value between 0 and 1, where higher values mean that $g(q)$ differs significantly from $g'(q)$. Then we will query $q$ with probability $d(g,g'; q)$. 

\subsection{Accommodating noisy feedback}

We are interested in the noisy setting, where the user's feedback may occasionally be inconsistent with the target structure. In this case, the notion of a version space is less clear-cut. Our proposed modification is very simple: the feedback at time $t$, say $(a_t,y_t)$, causes the posterior to be updated as follows:
\begin{equation}
\label{eq:0-1-update}
\pi_{t}(g) \ \propto \ \pi_{t-1}(g) \exp(- \beta \cdot {\bf 1}[g(a_t) \neq y_t]) .
\end{equation}
Here $\beta > 0$ is a constant that controls how aggressively errors are punished. In the noiseless setting, we can take $\beta = \infty$ and recover the original QBC update. Even with noise, however, we will demonstrate that this posterior update enjoys convergence guarantees. The full algorithm is shown in Figure~\ref{fig:structural-QBC}. 

\subsection{Uncertainty and informative queries}

What kinds of queries will structural QBC make? To answer this, we first quantify the {\it uncertainty} in the current posterior about a particular query or atom. For $a \in \A$ and $q \in \Q$ and any distribution $\pi$, write
\begin{align*}
u(a; \pi) &= \pr_{g,g' \sim \pi}(g(a) \neq g'(a)) \\
u(q; \pi) &= \E_{g,g' \sim \pi} [d(g,g'; q)]
\ = \  \E_{a \sim \mbox{\rm\scriptsize unif}(A(q))} [u(a; \pi)] ,
\end{align*}
where {\tt unif} denotes the uniform distribution. These uncertainty values lie in the range $[0,1]$. 

The probability that a particular query $q \in \Q$ is chosen in round $t$ by structural QBC is proportional to $\nu(q) u(q; \pi_{t-1})$. Thus, queries with higher uncertainty under the current posterior are more likely to be chosen. As the following lemma demonstrates, getting feedback on uncertain atoms leads to the elimination, or down-weighting in the case of noisy feedback, of many structures inconsistent with $g^*$. 
\begin{lemma}
\label{lem:shrinkage-uncertainty}
Take $\pi$ to be any distribution over $\G$. For any $a \in \A$ and any answer $y$ to $a$, 
\[  \pi(\{g : g(a) \neq y\}) \ \geq \ \frac{1}{2} u(a; \pi).  \]
\end{lemma}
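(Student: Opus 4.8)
The plan is to reduce the whole statement to the distribution that $\pi$ induces on the label of the single atom $a$. Let $\ell$ range over the possible answers to $a$, and set $p_\ell = \pi(\{g : g(a) = \ell\})$, so that $\sum_\ell p_\ell = 1$. The first step is to rewrite the uncertainty $u(a;\pi)$ as a collision probability: since $g$ and $g'$ are drawn independently from $\pi$,
\[ u(a;\pi) = \pr_{g,g' \sim \pi}(g(a) \neq g'(a)) = 1 - \sum_\ell p_\ell^2. \]

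Next, write $p = \pi(\{g : g(a) = y\})$ for the weight $\pi$ places on structures that answer $a$ with the given value $y$; note that $p$ may be zero if no structure in the support answers $y$. Then the left-hand side of the claim is exactly $\pi(\{g : g(a) \neq y\}) = 1 - p$, and the inequality to establish becomes
\[ 1 - p \ \geq \ \tfrac{1}{2}\Bigl(1 - \sum_\ell p_\ell^2\Bigr). \]

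The key step is to discard all the other labels: since every term is nonnegative, $\sum_\ell p_\ell^2 \geq p^2$, so the right-hand side is at most $\tfrac{1}{2}(1 - p^2)$. It therefore suffices to prove $1 - p \geq \tfrac{1}{2}(1-p^2)$. Factoring $1 - p^2 = (1-p)(1+p)$ reduces this (for $p < 1$) to $1 \geq \tfrac{1}{2}(1+p)$, i.e.\ $p \leq 1$, which always holds; the boundary case $p = 1$ forces all other $p_\ell = 0$ and makes both sides zero. This completes the argument.

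As for where the difficulty lies: the statement is elementary once one recognizes the collision-probability identity for $u(a;\pi)$ together with the factorization $1 - p^2 = (1-p)(1+p)$. The only modeling point to keep straight is that $y$ is an \emph{arbitrary} answer, not necessarily one in the support of $\pi$ restricted to $a$; handling $p = 0$ is immediate, since the left-hand side is then $1$ and dominates $\tfrac{1}{2}\,u(a;\pi) \leq \tfrac{1}{2}$, so no case analysis beyond noting $p \in [0,1]$ is needed.
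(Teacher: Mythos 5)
Your proof is correct and follows essentially the same route as the paper's: both rest on the collision-probability identity $u(a;\pi) = 1 - \sum_\ell p_\ell^2$ together with the bound $p^2 \leq \sum_\ell p_\ell^2$ for the mass $p$ on the given answer. The only cosmetic difference is that the paper reduces to the most likely answer and uses $\sqrt{1-u} \leq 1 - u/2$, whereas you work with an arbitrary $y$ directly and factor $1-p^2 = (1-p)(1+p)$; these are interchangeable elementary steps.
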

\begin{proof}
Suppose the possible answers to $a$ are $y_1, y_2, \ldots$, and that these have probabilities $p_1 \geq p_2 \geq \cdots$ respectively under $\pi$. That is $p_i = \pi(\{g : g(a) = y_i\}) .$ Then
\[ u(a; \pi) \ = \ 1 - \sum_i p_i^2 . \]
Note that $\pi(\{g : g(a) \neq y \})$ is smallest when $y = y_1$. Thus, we have 
\[ 1 - \pi(\{g : g(a) \neq y \})
\ \leq \ p_1 
\ \leq 
\ \sqrt{1 - u(a; \pi)} \ \leq \ 1 - \frac{1}{2}u(a; \pi), \]
Rearranging gives us the lemma.
\end{proof}

This gives some intuition for the query selection criterion of structural QBC, and will later be used in the proof of consistency.

\subsection{General loss functions}

The update rule for structural QBC, equation~(\ref{eq:0-1-update}), results in a posterior of the form $\pi_t(g) \propto \pi(g) \exp(-\beta \cdot \#(\mbox{mistakes made by $g$}))$, which can in general be difficult to sample from. To address this, we consider a broader class of updates,
\begin{equation}
\label{eq:general-update}
\pi_t(g) \ \propto \ \pi_{t-1}(g) \exp(- \beta \cdot \ell(g(a_t), y_t)) ,
\end{equation}
where $\ell(\cdot, \cdot)$ is a general loss function. In the special case where $\G$ consists of linear functions and $\ell$ is convex, the resulting posterior is a log-concave distribution, which allows for efficient sampling \cite{LV07}. We will show that this update also enjoys nice theoretical properties, albeit under different noise conditions. 

\begin{figure}
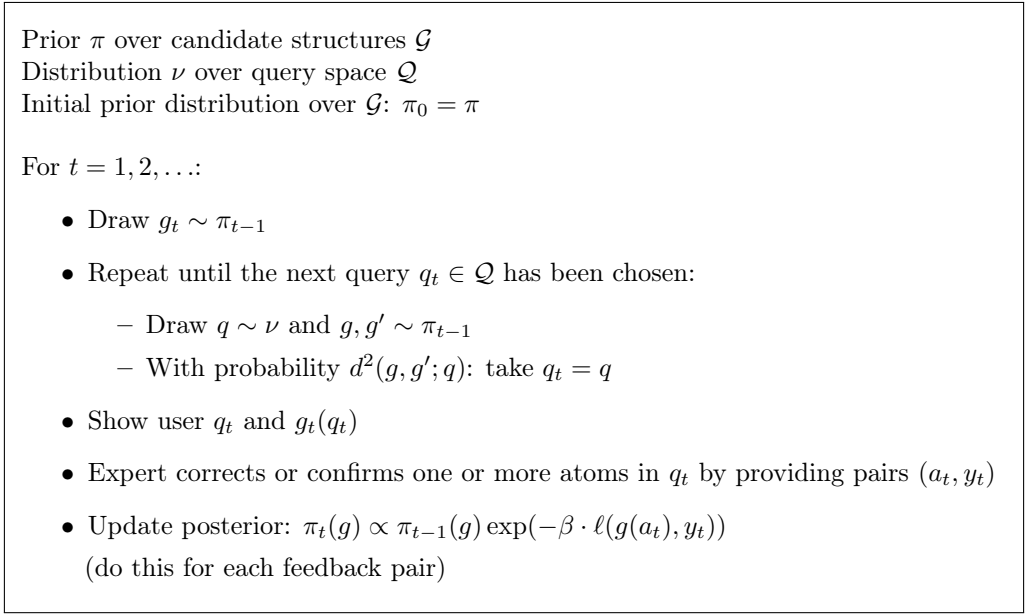

\begin{center}
\fbox{
\begin{minipage}[t]{5.2in}
\vskip.1in

Prior $\pi$ over candidate structures $\G$

Distribution $\nu$ over query space $\Q$

Initial prior distribution over $\G$: $\pi_0 = \pi$
\\

For $t = 1, 2, \ldots$:
\begin{itemize}
\item Draw $g_t \sim \pi_{t-1}$
\item Repeat until the next query $q_t \in \Q$ has been chosen:
\begin{itemize}
\item Draw $q \sim \nu$ and $g,g' \sim \pi_{t-1}$
\item With probability $d^2(g,g'; q)$: take $q_t = q$
\end{itemize}
\item Show user $q_t$ and $g_t(q_t)$
\item Expert corrects or confirms one or more atoms in $q_t$ by providing pairs $(a_t, y_t)$ 
\item Update posterior: $\pi_{t}(g) \propto \pi_{t-1}(g) \exp(- \beta \cdot \ell(g(a_t), y_t))$

(do this for each feedback pair)
\end{itemize}
\vskip.1in
\end{minipage}}
\end{center}
\caption{Structural QBC for general loss functions.}
\label{fig:structural-QBC-general}
\end{figure}

To formally specify the setting, let $\Y$ be the space of answers to atomic questions $\A$, and suppose that structures in $\G$ generate values in some possibly different prediction space $\Z \subseteq \R^d$. That is, we view each $g \in \G$ as a function $g: \A \rightarrow \Z$, and any output $z \in \Z$ gets translated to some prediction in $\Y$. The loss associated with predicting $z$ when the actual answer is $y$ is denoted $\ell(z,y)$. Here are some examples:
\begin{itemize}
\item $0-1$ loss. $\Z = \Y$ and $\ell(z,y) = {\bf 1}(y \neq z)$.
\item Logistic loss. $\Y = \{-1, 1\}$, $\Z = [-B,B]$ for some $B > 0$, and $\ell(z,y) = \ln (1 + e^{-yz})$.
\item Squared loss. $\Y = \{-1,1\}$, $\Z = [-B,B]$, and $\ell(z,y) = (y-z)^2$.
\end{itemize}
When moving from a discrete to a continuous prediction space, it becomes very possible that the predictions, on a particular atomic question, made by two randomly chosen structures will be close but not perfectly aligned. Thus, instead of checking strict equality of these predictions, we need to modify our querying strategy to take into account the distance between them. To this end, we will use the normalized average squared Euclidean distance:
\[ d^2(g, g' ; q) \ = \ \frac{1}{|A(q)|} \sum_{a \in A(q)} \frac{\|g(a) - g'(a)\|^2}{D} \] 
where $D = \max_{a \in \A} \max_{g, g' \in \G} \| g(a) - g'(a)\|^2$. Note that $d^2(g, g' ; q)$ is a value between 0 and 1, and thus we can treat it as a probability, similar to how we used $d(g,g';q)$ in the 0-1 loss setting. The full algorithm is shown in Figure~\ref{fig:structural-QBC-general}.

In the 0-1 loss setting, we saw that structural QBC chooses queries proportional to their uncertainty. What queries will structural QBC make in the general loss setting? Define the variance of $a \in \A$ under distribution $\pi$ as \[ \var(a; \pi) = \sum_{g \in \G} \pi(g) \, \| g(a) - \E_{g' \sim \pi}[(g'(a))] \|^2 = \frac{1}{2} \sum_{g, g' \in \G} \pi(g) \, \pi(g') \, \| g(a) - g'(a) \|^2 \]
and define the variance of a query $q \in \Q$ as the average variance of its constituent atoms,
\[ \var(q; \pi) = \E_{a \sim \unif(A(q))}[\var(a; \pi)] = \frac{1}{|A(q)|} \sum_{a \in A(q)} \var(a; \pi). \]
Then it is not hard to see that the probability that structural QBC chooses $q \in \Q$ at step $t$ is proportional to $\nu(q) \var(q; \pi_{t-1})$. 

\section{Kernelizing structural QBC}
\label{sec:kernels}
{Consider the special case where $\G$ consists of linear functions, i.e. $\G = \{ g_w(x) = \langle x, w \rangle \ : \ w \in \R^d \}$. As mentioned above, when the loss function is convex, the posteriors we encounter are log-concave, and thus efficiently samplable. But what if we want a more expressive class than linear functions? To address this, we resort to kernels. 

Gilad-Bachrach et al. \cite{GNT05} investigated the use of kernels in QBC. In particular, they observed that to run QBC, we need not actually sample from the prior restricted to the current version space. Rather, given a candidate query $x$, it is enough to be able to sample from the distribution this posterior induces over the labelings of $x$. Although their work was in the realizable binary setting, this observation readily applies to our noisy structural setting. 

Let $\phi: \X \rightarrow \R^d$ be a \emph{feature mapping}. Given a prior $\pi$ over $\R^d$, the posterior after observing $(x_1, y_1), \cdots, (x_t, y_t)$ becomes
\[ \pi_t(g_w) \propto  \pi(g_w) \exp \left(- \beta \sum_{i=1}^t \ell(\langle \phi(x_i),  w \rangle, y_i)\right). \]
A particularly interesting case is when $\ell(\cdot, \cdot)$ is the squared-loss and $\pi$ is Gaussian. In this case, we will show that the predictions of the posterior are distributed according to a univariate Gaussian distribution with efficiently computable mean and variance. To show this, we first observe that the posterior is a multivariate Gaussian.
\begin{restatable}{lemma}{GaussianPosteriorLemma}
\label{lem:gaussian-posterior}
Suppose $\pi = \N(0, \sigma_o^2 I_d)$, $\ell(\cdot, \cdot)$ is the squared-loss, and we have observed $(x_1, y_1), \cdots, (x_t, y_t)$. If take $\Phi \in \R^{t \times d}$ to denote the matrix 
\[ \Phi = \left[ \begin{array}{ccc}
   \horzbar & \phi(x_1) & \horzbar \\
   \horzbar & \phi(x_2) & \horzbar	\\
   				  &   \vdots	 & 				\\
   	\horzbar & \phi(x_t) & \horzbar
	\end{array} \right] . \] 
then $\pi_t$ is the multivariate Gaussian $\N(\widehat{\mu}, \widehat{\Sigma})$ where $\widehat{\Sigma} = \left(2 \beta \Phi^T \Phi + \frac{1}{\sigma_o^2} I_d \right)^{-1}$ and $\widehat{\mu} = 2\beta \widehat{\Sigma} \Phi^T y$.
\end{restatable}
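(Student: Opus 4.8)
The plan is to show that the unnormalized posterior density, viewed as a function of $w \in \R^d$, is the exponential of a quadratic form in $w$. Since any probability density of that shape is a multivariate Gaussian, it then suffices to read off its mean and covariance by matching coefficients against the canonical Gaussian exponent $-\tfrac{1}{2}(w-\mu)^T \Sigma^{-1}(w-\mu)$, without ever computing a normalizing constant.

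First I would write out $\log \pi_t(g_w)$ up to an additive term independent of $w$. The Gaussian prior $\N(0,\sigma_o^2 I_d)$ contributes $-\tfrac{1}{2\sigma_o^2}\|w\|^2$, while the squared loss contributes $-\beta \sum_{i=1}^t (y_i - \langle \phi(x_i), w\rangle)^2$. Stacking the feature vectors as the rows of $\Phi$, the vector of predictions is exactly $\Phi w$, so the data term collapses to $-\beta \|y - \Phi w\|^2$. Expanding the square and collecting terms, this yields, up to a constant,
\[ \log \pi_t(g_w) \ = \ -\frac{1}{2}\, w^T\!\left(\frac{1}{\sigma_o^2} I_d + 2\beta\, \Phi^T\Phi\right)\! w \ + \ 2\beta\, (\Phi^T y)^T w \ + \ \textrm{const}. \]

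Next I would compare this expression term by term with the expansion $-\tfrac12 w^T \Sigma^{-1} w + (\Sigma^{-1}\mu)^T w - \tfrac12 \mu^T\Sigma^{-1}\mu$ of a Gaussian $\N(\mu,\Sigma)$. Matching the quadratic coefficient gives $\widehat{\Sigma}^{-1} = \tfrac{1}{\sigma_o^2} I_d + 2\beta\,\Phi^T\Phi$, hence $\widehat{\Sigma} = (2\beta\,\Phi^T\Phi + \tfrac{1}{\sigma_o^2} I_d)^{-1}$; matching the linear coefficient gives $\widehat{\Sigma}^{-1}\widehat{\mu} = 2\beta\,\Phi^T y$, hence $\widehat{\mu} = 2\beta\,\widehat{\Sigma}\,\Phi^T y$. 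These are precisely the claimed values, and because the posterior is a genuine density, its normalization is forced, so no separate constant needs to be evaluated.

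I do not expect a real conceptual obstacle here; the statement is essentially the standard conjugacy of a Gaussian prior with a Gaussian (squared-loss) likelihood. The only points requiring care are bookkeeping ones: tracking the factors of $2$ and the signs correctly when expanding $\|y - \Phi w\|^2$ and completing the square, and confirming that the matrix $\tfrac{1}{\sigma_o^2} I_d + 2\beta\,\Phi^T\Phi$ is symmetric positive definite so that its inverse exists and the quadratic form defines a proper Gaussian. The latter is immediate, since $\Phi^T\Phi \succeq 0$ and $\tfrac{1}{\sigma_o^2} I_d \succ 0$ whenever $\sigma_o^2, \beta > 0$.
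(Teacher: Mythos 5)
Your proposal is correct and follows essentially the same route as the paper: both expand the unnormalized log-posterior $-\beta\|y-\Phi w\|^2 - \|w\|^2/(2\sigma_o^2)$ into a quadratic form in $w$ and match it against the exponent of $\N(\widehat{\mu},\widehat{\Sigma})$, with the normalization handled automatically. The only cosmetic difference is that the paper verifies the stated $\widehat{\mu},\widehat{\Sigma}$ by expanding both densities to a common expression, whereas you derive them by coefficient matching; the computation is identical.
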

We defer the proof of Lemma~\ref{lem:gaussian-posterior} to the appendix. Since $\pi_t$ is a multivariate Gaussian, we know that if $w \sim \pi_t$ and $v \in \R^d$ then $\langle w, v \rangle$ is distributed according to $\N(\mu, \sigma^2)$ where
\begin{align*}
\mu &= v^T \widehat{\mu} = 2\beta v^T \widehat{\Sigma} \Phi^T y \\
\sigma^2 &= v^T \widehat{\Sigma} v = v^T \left(2 \beta \Phi^T \Phi + \frac{1}{\sigma_o^2} I_d \right)^{-1} v
\end{align*}
Unfortunately, directly computing $\mu$ and $\sigma^2$ in the forms above requires expanding out the feature mappings, which is undesirable. However, the following theorem, known as the Woodbury Matrix Identity~\cite[Exercise 13.9]{H02}, allows us to rewrite these terms in a form only involving inner products of the feature vectors.
\begin{thm}[Woodbury Matrix Identity]
\label{thm: Woodbury matrix identity}
Let $T, W, U, V$ be matrices of the appropriate sizes. Then
\[ (T + U W^{-1} V)^{-1} = T^{-1} - T^{-1}U(W + V T^{-1} U)^{-1} V T^{-1}. \]
\end{thm}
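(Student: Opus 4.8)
The plan is to verify the identity directly rather than to derive it from scratch: I will check that the matrix on the right-hand side is genuinely the inverse of $M := T + U W^{-1} V$. Because all of the matrices are square (this is what ``appropriate sizes'' forces, given that $(T + U W^{-1} V)^{-1}$ is asserted to exist), it suffices to show that $N := T^{-1} - T^{-1}U(W + VT^{-1}U)^{-1}VT^{-1}$ is a right inverse of $M$; for a square matrix a right inverse is automatically a two-sided inverse, so $N = M^{-1}$ follows.

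First I would expand the product $MN = (T + U W^{-1} V)N$ term by term. The leading product $T \cdot T^{-1}$ contributes the identity $I$, while each of the three remaining terms carries a trailing factor of $VT^{-1}$ and a leading factor of $U$. Collecting these, the whole expression can be written in the form $I + U[\,\cdots\,]VT^{-1}$, which isolates the only part that needs to be shown to vanish.

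The crux is to show that the bracketed expression is zero. Abbreviating $S := VT^{-1}U$ and $K := (W + S)^{-1}$, the bracket takes the shape $W^{-1} - K - W^{-1}SK$. I would then factor $W^{-1}$ out of the last two terms to obtain $W^{-1} - (I + W^{-1}S)K = W^{-1} - W^{-1}(W+S)K = W^{-1} - W^{-1}(W+S)(W+S)^{-1} = W^{-1} - W^{-1} = 0$, and hence $MN = I$, which completes the verification.

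The main obstacle is simply disciplined bookkeeping: the matrices do not commute, so every product must be kept in its exact left-to-right order, and the factoring of $U$ on the left and $VT^{-1}$ on the right out of the sum of terms must respect that ordering. There is also one hygiene point worth stating explicitly, namely that the computation silently assumes $T^{-1}$, $W^{-1}$, and $(W + VT^{-1}U)^{-1}$ all exist; this is exactly the content of the ``appropriate sizes'' and invertibility hypotheses, and once it is granted the remaining calculation is entirely mechanical.
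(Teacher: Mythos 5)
Your verification is correct: expanding $MN$ with $M = T + UW^{-1}V$ and $N = T^{-1} - T^{-1}U(W+VT^{-1}U)^{-1}VT^{-1}$ does give $I + U\bigl[W^{-1} - (W+S)^{-1} - W^{-1}S(W+S)^{-1}\bigr]VT^{-1}$ with $S = VT^{-1}U$, and your factoring of the bracket as $W^{-1} - W^{-1}(W+S)(W+S)^{-1} = 0$ is exactly right. The paper itself offers no proof of this theorem---it is cited to an external reference as a known identity---so there is nothing to compare against; your direct check-by-multiplication is the standard argument and is complete. One small imprecision: it is not true that ``all of the matrices are square'' ($U$ and $V$ are in general rectangular, of sizes $n\times k$ and $k\times n$); what matters, and what your argument actually uses, is that $T$, $W$, $W+VT^{-1}U$, and hence $M$ and $N$ are square, so that the right inverse you exhibit is automatically two-sided.
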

\noindent Theorem~\ref{thm: Woodbury matrix identity} implies that we can rewrite $\widehat{\Sigma}$ as
\begin{align*}
\widehat{\Sigma} &= \left(\frac{1}{\sigma_o^2} I_d + \Phi^T (2\beta I_n) \Phi  \right)^{-1} \\
&= \sigma_o^2 I_d - \sigma_o^2 I_d \Phi^T\left( \frac{1}{2\beta} I_n + \Phi(\sigma_o^2 I_d)\Phi^T \right)^{-1} \Phi (\sigma_o^2 I_d)\\
&= \sigma_o^2 \left( I_d - \Phi^T \left( \frac{1}{ 2\sigma_o^2 \beta} I_n + \Phi \Phi^T \right)^{-1} \Phi \right) \\
&=  \sigma_o^2 \left( I_d - \Phi^T \Sigma_0 \Phi \right)
\end{align*}
where $\Sigma_o = \left( \frac{1}{ 2\sigma_o^2 \beta} I_t + \Phi \Phi^T \right)^{-1}$. With this observation in hand, the following lemma readily follows.
\begin{restatable}{lemma}{KernelSquaredLossLemma}
\label{lem:kernel-derivation}
Suppose the assumptions of Lemma~\ref{lem:gaussian-posterior} hold. If $g_w \sim \pi_t$, then $\langle w, \phi(x) \rangle$ is distributed according to $\N(\mu, \sigma^2)$ where
\begin{align*}
\mu &= 2\sigma_o^2 \beta \kappa^T   \left(  I_t  -  \Sigma_o K \right) y \\
\sigma^2 &= \sigma_o^2 \left( \phi(x)^T  \phi(x) -  \kappa^T \Sigma_o \kappa \right)
\end{align*}
where $K_{ij} = \langle \phi(x_i), \phi(x_j)\rangle$, $\kappa_i = \langle \phi(x_i), \phi(x)\rangle$, and $\Sigma_o = \left( \frac{1}{ 2\sigma^2 \beta} I_t + K \right)^{-1}$.
\end{restatable}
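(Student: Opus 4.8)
The plan is to treat the lemma as a direct substitution built on top of the two facts already in hand. From Lemma~\ref{lem:gaussian-posterior} we have $\pi_t = \N(\widehat{\mu}, \widehat{\Sigma})$ with $\widehat{\mu} = 2\beta \widehat{\Sigma} \Phi^T y$, and the Woodbury computation carried out in the text just before the lemma gives $\widehat{\Sigma} = \sigma_o^2(I_d - \Phi^T \Sigma_o \Phi)$ with $\Sigma_o = (\frac{1}{2\sigma_o^2\beta} I_t + \Phi\Phi^T)^{-1}$. Since any linear functional of a multivariate Gaussian is again Gaussian, setting $v = \phi(x)$ in the displayed identities immediately preceding the lemma shows that $\langle w, \phi(x)\rangle \sim \N(\mu, \sigma^2)$ with $\mu = \phi(x)^T \widehat{\mu}$ and $\sigma^2 = \phi(x)^T \widehat{\Sigma} \phi(x)$. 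So the only remaining work is to rewrite these two scalars purely in terms of inner products.

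The single observation that collapses everything to kernel evaluations is that $\Phi \phi(x)$ is the $t$-vector whose $i$-th entry is $\langle \phi(x_i), \phi(x)\rangle$, i.e. $\Phi \phi(x) = \kappa$ (hence $\phi(x)^T \Phi^T = \kappa^T$), together with $\Phi \Phi^T = K$, which already identifies $\Sigma_o$ with $(\frac{1}{2\sigma_o^2\beta} I_t + K)^{-1}$. For the variance I would plug the Woodbury form into $\sigma^2 = \phi(x)^T \widehat{\Sigma} \phi(x)$:
\[ \sigma^2 = \sigma_o^2\left( \phi(x)^T \phi(x) - \phi(x)^T \Phi^T \Sigma_o \Phi \phi(x) \right) = \sigma_o^2\left( \phi(x)^T \phi(x) - \kappa^T \Sigma_o \kappa \right), \]
which is exactly the claimed expression.

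For the mean I would combine $\widehat{\mu} = 2\beta \widehat{\Sigma} \Phi^T y$ with the same Woodbury form, so that $\mu = 2\beta\, \phi(x)^T \widehat{\Sigma} \Phi^T y = 2\sigma_o^2 \beta\, \phi(x)^T (I_d - \Phi^T \Sigma_o \Phi) \Phi^T y$. Expanding the parenthesized factor and using $\phi(x)^T \Phi^T = \kappa^T$ and $\Phi \Phi^T = K$ turns it into $\kappa^T - \kappa^T \Sigma_o K = \kappa^T (I_t - \Sigma_o K)$, giving $\mu = 2\sigma_o^2 \beta\, \kappa^T (I_t - \Sigma_o K) y$, as desired.

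I do not expect a genuine obstacle here beyond careful bookkeeping of matrix dimensions ($\Phi \in \R^{t \times d}$, while $K, \Sigma_o \in \R^{t \times t}$ and $\kappa \in \R^t$) and remembering to invoke the Gaussian linear-functional fact at the very start; the heavy lifting, namely the Woodbury rewrite of $\widehat{\Sigma}$, has already been done in the text preceding the lemma, so the proof is essentially substitution. The one point worth flagging is the apparent typo in the statement, where $\Sigma_o$ is written with $\frac{1}{2\sigma^2\beta}$ rather than $\frac{1}{2\sigma_o^2\beta}$; the derivation makes clear that the prior variance $\sigma_o^2$ is the intended quantity.
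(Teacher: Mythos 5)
Your proof is correct and matches the paper's intended argument: the paper leaves this lemma as "readily follows" from the Woodbury rewrite $\widehat{\Sigma} = \sigma_o^2(I_d - \Phi^T \Sigma_o \Phi)$, and you have filled in exactly the substitution $v = \phi(x)$, $\Phi\phi(x) = \kappa$, $\Phi\Phi^T = K$ that the authors had in mind. Your flag of the typo ($\sigma^2$ in place of $\sigma_o^2$ in the definition of $\Sigma_o$) is also correct.
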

The important observation here is that all the quantities involving the feature mapping in Lemma~\ref{lem:kernel-derivation} are inner products. Thus we never need to explicitly construct any feature vectors.}

\section{Consistency of structural QBC}
\label{sec:consistency}

In this section, we look at a typical setting in which there is a finite but possibly very large pool of candidate questions $\Q$, and thus the space of structures $\G$ is effectively finite. Let $g^* \in \G$ be the target structure, as before. Our goal in this setting is to demonstrate the \emph{consistency} of structural QBC, meaning that \[ \lim_{t \rightarrow \infty} \pi_t(g^*) = 1 \] almost surely. To do so, we first formalize our setting. Note that the random outcomes during time step $t$ of structural QBC consist of:
\begin{itemize}
\item the query $q_t$;
\item the atomic question $a_t \in A(q_t)$ that the expert chooses to answer (pick one at random if the expert answers several of them); and
\item the response $y_t$ to $a_t$.
\end{itemize}
Let $\F_t$ denote the sigma-field of all outcomes up to, and including, time $t$. We begin with the special case of structural QBC under the 0-1 loss.

\subsection{Consistency under 0-1 loss}

In order to prove consistency, we will have to make some assumptions about the feedback we receive from a user. For any query $q \in \Q$ and any atomic question $a \in A(q)$, let $\eta(y | a, q)$ denote the conditional probability that the user answers $y$ to atomic question $a$, in the context of query $q$. Our first assumption is that the single most likely answer is $g^*(a)$.
\begin{assump}
\label{assump:0-1-noise}
There exists $0 < \lambda \leq 1$ such that $\eta(g^*(a) |  a,q) - \eta(y | a,q) \geq \lambda$ for all $q \in \Q$ and $a \in A(q)$ and all $y \neq g^*(a)$.
\end{assump}
(We will use the convention $\lambda = 1$ for the noiseless setting.) In the learning literature, Assumption~\ref{assump:0-1-noise} is known as Massart's bounded noise condition~\cite{ABHU15}. As an example, suppose that there are 11 possible answers to an atom. Then a user that answers correctly with probability 0.10 and provides every other incorrect answer with probability 0.09 would satisfy Assumption~\ref{assump:0-1-noise} with $\lambda = 0.01$. Thus, Assumption~\ref{assump:0-1-noise} allows for users who are prone to mistakes, but not inherently biased to a particular incorrect answer.

The following lemma demonstrates that under Assumption~\ref{assump:0-1-noise}, the posterior probability of $g^*$ increases in expectation with each query, as long as the $\beta$ parameter of the update rule in equation~(\ref{eq:0-1-update}) is small enough relative to $\lambda$.
\begin{lemma}
Fix any $t$, and suppose the expert provides an answer to atomic question $a_t \in A(q_t)$ at time $t$. Let $\gamma_t = \pi_{t-1}(\{g \in \G: g(a_t) = g^*(a_t)\})$. Define $\Delta_t$ by:
\[ \E \left[\frac{1}{\pi_{t}(g^*)} \bigg\vert \F_{t-1}, q_t, a_t \right] \ = \ (1- \Delta_t) \frac{1}{\pi_{t-1}(g^*)}, \]
Under Assumption~\ref{assump:0-1-noise}, $\Delta_t$ can be lower-bounded as follows:
\begin{enumerate}
\item[(a)] If $\lambda = 1$ (noiseless setting), $\Delta_t \geq (1-\gamma_t)(1-e^{-\beta})$.
\item[(b)] For any $0 < \lambda \leq 1$, if $\beta \leq \lambda/2$, then $\Delta_t \geq \beta \lambda (1-\gamma_t)/2$.
\end{enumerate}
\label{lem:0-1-noise}
\end{lemma}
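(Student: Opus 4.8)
The plan is to first obtain an exact expression for the one-step multiplicative change in $1/\pi_t(g^*)$, and then bound the expectation over the random response $y_t$. Writing $a = a_t$ for brevity and letting $Z_t = \sum_g \pi_{t-1}(g)\exp(-\beta\,{\bf 1}[g(a)\neq y_t])$ be the normalizer of the update~(\ref{eq:0-1-update}), I would start from $\pi_t(g^*) = \pi_{t-1}(g^*)\exp(-\beta\,{\bf 1}[g^*(a)\neq y_t])/Z_t$, which rearranges to
\[ \frac{\pi_{t-1}(g^*)}{\pi_t(g^*)} \ = \ Z_t\,\exp\big(\beta\,{\bf 1}[g^*(a)\neq y_t]\big). \]
Given $\F_{t-1}, q_t, a_t$, the posterior $\pi_{t-1}$ is fixed, so the only remaining randomness is $y_t \sim \eta(\cdot\,|\,a,q_t)$; thus $1-\Delta_t$ equals the $\eta$-expectation of the right-hand side. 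For each possible answer $y$, let $p_y = \pi_{t-1}(\{g: g(a)=y\})$, so that $Z_t = p_y + (1-p_y)e^{-\beta}$, and note $p_{g^*(a)} = \gamma_t$.

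The second step is to evaluate this expectation by splitting on whether $y = g^*(a)$ or $y\neq g^*(a)$. Writing $\eta^* = \eta(g^*(a)\,|\,a,q_t)$ and $\eta_y = \eta(y\,|\,a,q_t)$, the correct-answer term contributes $\eta^*[\gamma_t + (1-\gamma_t)e^{-\beta}]$, while each incorrect $y$ contributes $\eta_y\,e^{\beta}[p_y+(1-p_y)e^{-\beta}] = \eta_y[1 + p_y(e^\beta-1)]$. Collecting terms and using $\sum_{y\neq g^*(a)}\eta_y = 1-\eta^*$ yields the exact identity
\[ \Delta_t \ = \ \eta^*(1-\gamma_t)(1-e^{-\beta}) \ - \ (e^\beta-1)\!\!\sum_{y\neq g^*(a)}\!\!\eta_y\,p_y. \]
Now Assumption~\ref{assump:0-1-noise} gives $\eta_y \leq \eta^* - \lambda$ for every $y\neq g^*(a)$, and $\sum_{y\neq g^*(a)} p_y = 1-\gamma_t$, so the error term is at most $(e^\beta-1)(\eta^*-\lambda)(1-\gamma_t)$. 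Factoring out $(1-\gamma_t)$ reduces both parts of the lemma to lower-bounding the bracket $B := \eta^*(1-e^{-\beta}) - (e^\beta-1)(\eta^*-\lambda)$.

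For part~(a), $\lambda=1$ forces $\eta^*=1$ and $\eta_y=0$, so the error term vanishes and $\Delta_t = (1-\gamma_t)(1-e^{-\beta})$ exactly. For part~(b), I would rewrite $B = (e^\beta-1)\lambda - \eta^*(e^\beta + e^{-\beta} - 2)$ and invoke the key identity $e^\beta + e^{-\beta} - 2 = (e^\beta-1)(1-e^{-\beta})$; since $\eta^*\leq 1$, this gives $B \geq (e^\beta-1)\big[\lambda - (1-e^{-\beta})\big]$. Finally $1-e^{-\beta}\leq \beta \leq \lambda/2$ yields $\lambda - (1-e^{-\beta}) \geq \lambda/2$, and $e^\beta-1\geq\beta$, so $B \geq \beta\lambda/2$, giving $\Delta_t \geq \beta\lambda(1-\gamma_t)/2$. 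I expect the main obstacle to be this last estimate: the exact formula for $\Delta_t$ and the Massart bound are routine, but extracting the clean threshold $\beta\leq\lambda/2$ requires spotting the factorization $e^\beta+e^{-\beta}-2 = (e^\beta-1)(1-e^{-\beta})$, which is exactly what lets the two competing exponential terms be compared after pulling out the common factor $e^\beta-1$.
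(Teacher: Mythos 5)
Your proposal is correct and follows essentially the same route as the paper: the same exact expression for $\Delta_t$ obtained by averaging over the expert's response, followed by the Massart bound $\eta_y \leq \eta^* - \lambda$ to control the error term. The only (harmless) divergence is in the final elementary estimate for part~(b), where the paper uses the quadratic bounds $e^{\pm\beta} \leq 1 \pm \beta + \beta^2$ while you use the factorization $e^\beta + e^{-\beta} - 2 = (e^\beta - 1)(1 - e^{-\beta})$ together with $1 - e^{-\beta} \leq \beta \leq e^\beta - 1$; both yield the stated bound under $\beta \leq \lambda/2$.
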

\begin{proof}
Let $y_1, y_2, \ldots$ denote the possible answers to $a_t$, and set $p_j = \eta(y_j | a_t, q_t)$ be the probability that the labeler answers $y_j$. Without loss of generality, suppose $p_1 \geq p_2 \geq \cdots$, so that (under Assumption~\ref{assump:0-1-noise}) $g^*(a_t) = y_1$ and $p_1 - p_2 \geq \lambda$.

Further, define $\G_j = \{g \in \G: g(a_t) = y_j\}$. Thus $\gamma_t = \pi_{t-1}(\G_1)$. By averaging over the expert's possible responses, we have
\begin{align*}
\lefteqn{\E \left[\frac{1}{\pi_t(g^*)} \bigg\vert \F_{t-1}, q_t, a_t \right]} \\
&= p_1 \frac{\pi_{t-1}(\G_1) + e^{-\beta}(1 - \pi_{t-1}(\G_1))}{\pi_{t-1}(g^*)} + \sum_{j > 1} p_j \frac{\pi_{t-1}(\G_j) + e^{-\beta}(1 - \pi_{t-1}(\G_j))}{e^{-\beta}\pi_{t-1}(g^*)} \\
&= \frac{1}{\pi_{t-1}(g^*)} \left( p_1 (\pi_{t-1}(\G_1) + e^{-\beta}(1-\pi_{t-1}(\G_1)) + \sum_{j>1} p_j (e^\beta \pi_{t-1}(\G_j) + (1-\pi_{t-1}(\G_{j}))) \right) \\
&= \frac{1}{\pi_{t-1}(g^*)} \left( p_1 (1 - (1-\pi_{t-1}(\G_1))(1-e^{-\beta})) + \sum_{j>1} p_j (1 + (e^\beta-1) \pi_{t-1}(\G_j)) \right) \\
&=  \frac{1}{\pi_{t-1}(g^*)} - \frac{1}{\pi_{t-1}(g^*)}\left( p_1(1-\pi_{t-1}(\G_1))(1-e^{-\beta}) - \sum_{j>1} p_j (e^\beta-1)\pi_{t-1}(\G_j) \right).
\end{align*}
Setting the parenthesized term to $\Delta_t$, we have
$$
\Delta_t 
\ \geq \ 
p_1 (1-\pi_{t-1}(\G_1))(1-e^{-\beta}) - p_2 (e^{\beta}-1) \sum_{j>2} \pi_{t-1}(\G_j) 
\ = \ 
\left( p_1(1-e^{-\beta}) -  p_2(e^{\beta}-1) \right) (1- \gamma_t) .
$$
This yields (a) in the lemma statement. For (b), using the inequalities $e^{\beta} \leq 1 + \beta + \beta^2$ and $e^{-\beta} \leq 1 - \beta + \beta^2$ for $0 \leq \beta \leq 1$, we get
$$
\Delta_t 
\ \geq \ 
\left(  p_1 (\beta - \beta^2)  -  p_2 (\beta + \beta^2) \right) (1- \gamma_t) 
\ \geq \ 
\beta \left( (p_1 - p_2) - \beta( p_1 + p_2 )\right)  (1- \gamma_t) 
\ \geq \ 
\beta (\lambda - \beta)  (1- \gamma_t) .
$$
Taking $\beta \leq \lambda/2$ completes the proof.
\end{proof}

To understand the requirement $\beta = O(\lambda)$, consider an atomic question on which there are just two possible labels, 1 and 2, and the expert chooses these with probabilities $p_1$ and $p_2$, respectively. If the correct answer according to $g^*$ is 1, then $p_1 \geq p_2 + \lambda$ under Assumption~\ref{assump:0-1-noise}. Let $\G_2$ denote structures that answer 2.
\begin{itemize}
\item With probability $p_1$, the expert answers 1, and the posterior mass of $\G_2$ is effectively multiplied by $e^{-\beta}$.
\item With probability $p_2$, the expert answers 2, and the posterior mass of $\G_2$ is effectively multiplied by $e^\beta$.
\end{itemize}
The second outcome is clearly undesirable. In order for it to be counteracted, in expectation, by the first, $\beta$ must be kept fairly small relative to $p_1/p_2$. The condition $\beta \leq \lambda/2$ is sufficient for this.

Thus, Lemma~\ref{lem:0-1-noise} asserts that structural QBC shrinks $1/\pi_t(g^*)$, in expectation, on every round: it corresponds to a random walk with a drift in the right direction. This drift is proportional to $\beta \lambda (1-\gamma_t)$, where $\gamma_t$ is the probability mass, under the current posterior, of structures that agree with $g^*$ on the atom $a_t$. 

Lemma~\ref{lem:0-1-noise} does not, in itself, imply consistency. It is quite possible for $1/\pi_t(g^*)$ to keep shrinking but not converge to 1. Imagine, for instance, that the input space has two parts to it, and we keep improving on one of them but not the other. What we need is, first, to ensure that the queries $q_t$ capture some portion of the uncertainty in the current posterior, and second, that the user chooses an atom that is at least slightly informative. The first condition is assured by the SQBC querying strategy. For the second, we need an assumption.

\begin{assump}
\label{assump:correction-feedback}
There is some minimum probability $p_o > 0$ for which the following holds. If the user is presented with a query $q$ and a structure $g \in \G$ such that $g(q) \neq g^*(q)$, the user will provide feedback on some $a \in A(q)$ such that $g(a) \neq g^*(a)$ with probability at least $p_o$.
\end{assump}
To understand this, note that the interface allows the user to either correct a mistake in $g(q)$ or to corroborate part of it that is correct. The assumption asserts that the former occurs at least some fraction of the time, in situations where $g(q)$ is not perfect. It is one way of avoiding scenarios in which a user never provides feedback on a particular atom $a$. In such a pathological case, we might not be able to recover $g^*(a)$, and thus our posterior will always put some probability mass on structures that disagree with $g^*$ on $a$.

The following lemma, whose proof is deferred to the appendix, gives lower bounds on the quantity $1-\gamma_t$ under Assumption~\ref{assump:correction-feedback}. 
\begin{restatable}{lemma}{CorrectionFeedbackShrinkage}
\label{lem:correction-feedback-shrinkage}
Suppose that $\G$ is finite and the user's feedback obeys Assumption~\ref{assump:correction-feedback}. Then there exists a constant $c>0$ such that for every round $t$ \[ \E[1- \gamma_t \, | \, \F_{t-1}] \ \geq \ c \, \pi_{t-1}(g^*)^2 (1-\pi_{t-1}(g^*))^2 \] where $\gamma_t = \pi_{t-1}(\{g \in \G: g(a_t) = g^*(a_t)\})$, $a_t$ is the atom the user provides feedback on, and the expectation is taken over the randomness in structural QBC and the user's response.
\end{restatable}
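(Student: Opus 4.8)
The plan is to lower-bound $1-\gamma_t = \pi_{t-1}(\{g : g(a_t)\neq g^*(a_t)\})$ by isolating a single favorable run of the algorithm and discarding all other (nonnegative) contributions. Write $\pi=\pi_{t-1}$ and $p^*=\pi(g^*)$; if $p^*\in\{0,1\}$ the claimed bound is $0$ and holds trivially, so assume $0<p^*<1$. For an atom $a$ let $m(a)=\pi(\{g:g(a)\neq g^*(a)\})$, so that $1-\gamma_t=m(a_t)$. Let $\widehat g=\argmax_{g\neq g^*}\pi(g)$ be the heaviest non-target structure; since the non-target mass $1-p^*$ is spread over at most $|\G|$ structures, $\pi(\widehat g)\geq (1-p^*)/|\G|$. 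By local checkability there is an atom $a^*$ with $\widehat g(a^*)\neq g^*(a^*)$, and (using that $\G,\Q$ are finite and, as is implicit for consistency, that every such distinguishing atom lies in $A(q)$ for some $q$ in the support of $\nu$) a query $q^\circ$ with $a^*\in A(q^\circ)$ and $\nu(q^\circ)\geq\nu_{\min}>0$, where $\nu_{\min}$ is the minimum over distinguishing atoms of the largest $\nu$-mass of a covering query.

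Recall from the paper that $q_t$ is chosen with probability $\nu(q)u(q;\pi)/Z$ where $Z=\E_{q\sim\nu}[u(q;\pi)]$. First I would lower-bound $\Pr(q_t=q^\circ)$. The key observation controlling the normalizer is that, at every atom, the answer $g^*(a)$ carries mass at least $p^*$, so $u(a;\pi)=1-\sum_y\pi(\{g:g(a)=y\})^2\leq 1-(p^*)^2\leq 2(1-p^*)$; averaging over atoms gives $u(q;\pi)\leq 2(1-p^*)$ for every $q$, hence $Z\leq 2(1-p^*)$. In the other direction, $u(q^\circ;\pi)\geq u(a^*;\pi)/s_{\max}$ with $s_{\max}=\max_q|A(q)|$, and $u(a^*;\pi)=\Pr_{g,g'\sim\pi}(g(a^*)\neq g'(a^*))\geq 2\,p^*\,\pi(\widehat g)$ by keeping only the two ordered pairs that answer $a^*$ with $g^*(a^*)$ and with $\widehat g(a^*)$. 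Combining, $\Pr(q_t=q^\circ)\geq \nu_{\min}\,p^*\,\pi(\widehat g)/(s_{\max}(1-p^*))$.

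Next I would condition on the favorable feedback event given $q_t=q^\circ$. Since $g_t\sim\pi$ is drawn independently of $q_t$, with probability $\pi(\widehat g)$ the displayed structure is $\widehat g$; as $\widehat g(a^*)\neq g^*(a^*)$ we have $\widehat g(q^\circ)\neq g^*(q^\circ)$, so Assumption~\ref{assump:correction-feedback} guarantees that with probability at least $p_o$ the user corrects some atom $a_t\in A(q^\circ)$ with $\widehat g(a_t)\neq g^*(a_t)$. On this event $\widehat g$ itself lies in $\{g:g(a_t)\neq g^*(a_t)\}$, so $m(a_t)\geq\pi(\widehat g)$, giving $\E[m(a_t)\mid q_t=q^\circ]\geq p_o\,\pi(\widehat g)^2$. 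Multiplying by $\Pr(q_t=q^\circ)$, substituting $\pi(\widehat g)\geq(1-p^*)/|\G|$, and cancelling one factor of $(1-p^*)$ yields $\E[1-\gamma_t\mid\F_{t-1}]\geq \frac{p_o\nu_{\min}}{s_{\max}|\G|^3}\,p^*(1-p^*)^2$. Since $p^*\geq(p^*)^2$, this is at least $c\,(p^*)^2(1-p^*)^2$ with $c=p_o\nu_{\min}/(s_{\max}|\G|^3)$, which is the claim (in fact slightly stronger).

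The step I expect to be the main obstacle is making the argument robust to the two sources of behaviour we do not control: which query the rejection sampler returns, and which disagreeing atom the user elects to correct. The first is handled by the normalizer bound $Z\leq 2(1-p^*)$, which is what prevents a diffuse posterior (with uncertainty spread across many queries) from washing out the selection probability of $q^\circ$; this is the one genuinely delicate estimate, since the naive bound $Z\leq 1$ loses a factor of $(1-p^*)$ and is too weak as $p^*\to 1$. The second is finessed by the deliberately crude bound $m(a_t)\geq\pi(\widehat g)$, which holds no matter which disagreeing atom the user picks, at the cost of tracking only the heaviest non-target structure. A secondary point to nail down is the support condition defining $\nu_{\min}$: one must verify that every atom capable of distinguishing a surviving structure from $g^*$ is reachable by some query of positive $\nu$-mass, since otherwise the relevant posterior mass can never be queried and no such $c$ exists.
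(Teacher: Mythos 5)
Your proof is correct, but it takes a genuinely different route from the paper's. The paper first proves an auxiliary lemma bounding the expected uncertainty $\E[u(q_t;\pi_{t-1})]$ of the selected query by $c\,\pi_{t-1}(g^*)(1-\pi_{t-1}(g^*))$, and then runs a ``known/unknown atom'' dichotomy: atoms with disagreement mass below $\epsilon U_t$ are union-bounded away, some atom has at least the average uncertainty $U_t$, so with probability $\Omega(U_t)$ the displayed $g_t$ errs only on atoms whose disagreement mass is at least $\epsilon U_t$, and any correction the user makes is then worth $\epsilon U_t$. That argument works uniformly over whichever query is drawn and yields a constant independent of $|\G|$. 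You instead isolate a single favorable trajectory --- the heaviest non-target structure $\widehat g$ with $\pi(\widehat g)\geq(1-p^*)/|\G|$, one query $q^\circ$ covering an atom distinguishing it from $g^*$, the event $g_t=\widehat g$, and the $p_o$-correction event --- and discard everything else. The step that makes this work, and which has no counterpart in the paper, is the upper bound $Z=\E_{q\sim\nu}[u(q;\pi)]\leq 1-(p^*)^2\leq 2(1-p^*)$ on the rejection sampler's normalizer; as you note, the naive $Z\leq 1$ would lose a factor of $(1-p^*)$ and fail as $p^*\to1$. The trade-off is that your constant degrades as $|\G|^{-3}$ where the paper's depends only on $p_o$, $\min_q\nu(q)$, and $\max_q|A(q)|$; on the other hand your bound $p^*(1-p^*)^2$ is slightly stronger in its dependence on $p^*$ than the stated $(p^*)^2(1-p^*)^2$. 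The support condition you flag (every atom distinguishing some $g\neq g^*$ from $g^*$ must lie in $A(q)$ for some $q$ with $\nu(q)>0$) is indeed needed, but the paper makes the same implicit assumption in its auxiliary lemma when it sets $\nu_o=\min_{q\in\Q}\nu(q)$ and asserts $\E_{q\sim\nu}[d(g,g^*;q)]\geq\nu_o/A_o$, so you are not assuming anything extra.
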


Together, Lemmas~\ref{lem:0-1-noise} and~\ref{lem:correction-feedback-shrinkage} show that the sequence $1/\pi_t(g^*)$ is a positive supermartingale that decreases in expectation at each round by an amount that depends on $\pi_t(g^*)$. The following lemma gives us a condition under which such stochastic processes can be guaranteed to converge to 1.

\begin{lemma}
\label{lem:general-consistency}
Suppose that there exists a continuous, non-negative function $f:[0,1] \rightarrow \R_{\geq 0}$ such that $f(1) = 0$ and $f(x) > 0$ for all $x \in (0,1)$. If for each $t \in \NN$, we have \[ \E\left[ \frac{1}{\pi_{t}(g^*)} \, \bigg| \, \F_{t-1} \right] \leq \frac{1}{\pi_{t-1}(g^*)} - f(\pi_{t-1}(g^*))\] 
then $\pi_t(g^*) \rightarrow 1$ almost surely.
\end{lemma}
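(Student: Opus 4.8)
The plan is to recognize the hypothesis as the statement that $Z_t := 1/\pi_t(g^*)$ is a supermartingale, and then to combine its almost-sure convergence (via Doob) with a summability estimate harvested from the drift term $f$. The only genuinely delicate point will be excluding a degenerate limit, and the structure of the argument is designed around that.

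First I would observe that $\pi_t(g^*) \in (0,1]$, so $Z_t \geq 1$, and that the hypothesis together with $f \geq 0$ gives
\[ \E\left[ Z_t \mid \F_{t-1} \right] \leq Z_{t-1} - f(\pi_{t-1}(g^*)) \leq Z_{t-1}. \]
Hence $(Z_t)$ is a non-negative supermartingale. Since $Z_t \geq 0$ we have $\sup_t \E[\max(-Z_t,0)] = 0 < \infty$, so Doob's supermartingale convergence theorem applies and $Z_t$ converges almost surely to a \emph{finite} limit $Z_\infty \in [1,\infty)$. Consequently $\pi_t(g^*) = 1/Z_t$ converges almost surely to $\pi_\infty := 1/Z_\infty \in (0,1]$, and — the point I will lean on at the end — $\pi_\infty > 0$ precisely because $Z_\infty < \infty$.

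Next I would extract summability of the drift. Rearranging the hypothesis gives $f(\pi_{t-1}(g^*)) \leq Z_{t-1} - \E[Z_t \mid \F_{t-1}]$; taking total expectations (using the tower property) and telescoping from $t=1$ to $T$ yields
\[ \sum_{t=1}^{T} \E\left[ f(\pi_{t-1}(g^*)) \right] \leq \E[Z_0] - \E[Z_T] \leq \E[Z_0] - 1 = \frac{1}{\pi(g^*)} - 1, \]
where the second inequality uses $Z_T \geq 1$ and $Z_0 = 1/\pi(g^*)$ is a finite deterministic constant. Letting $T \to \infty$ and applying Tonelli's theorem gives $\E\big[\sum_{t \geq 0} f(\pi_t(g^*))\big] < \infty$, so $\sum_{t \geq 0} f(\pi_t(g^*)) < \infty$ almost surely, and in particular $f(\pi_t(g^*)) \to 0$ almost surely.

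Finally I would merge the two conclusions. Since $\pi_t(g^*) \to \pi_\infty$ almost surely and $f$ is continuous, $f(\pi_t(g^*)) \to f(\pi_\infty)$; matching this against $f(\pi_t(g^*)) \to 0$ forces $f(\pi_\infty) = 0$ almost surely. Because $f(x) > 0$ for every $x \in (0,1)$, the only zero of $f$ on $(0,1]$ is at $x = 1$, so $\pi_\infty = 1$, giving $\pi_t(g^*) \to 1$ almost surely. The step I expect to be the main obstacle is exactly the exclusion of the degenerate limit $\pi_\infty = 0$: since $f$ is only guaranteed positive on the \emph{open} interval $(0,1)$, a limit of $0$ could a priori also satisfy $f(\pi_\infty) = 0$. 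This is why the finiteness of $Z_\infty$ supplied by the supermartingale convergence theorem is indispensable — it guarantees $\pi_\infty = 1/Z_\infty > 0$ and thereby leaves $\pi_\infty = 1$ as the only admissible value.
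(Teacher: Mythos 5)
Your proof is correct and follows essentially the same route as the paper's: view $1/\pi_t(g^*)$ as a non-negative supermartingale, use its almost-sure convergence to a finite limit to rule out the degenerate limit $0$, extract from the drift that $f(\pi_t(g^*)) \to 0$ almost surely, and conclude via continuity that the limit must be $1$. The only cosmetic difference is that you obtain $f(\pi_t(g^*)) \to 0$ from summability of the expected drifts via Tonelli, whereas the paper passes from $\E[f(X_t)] \to 0$ to $\E[f(X)] = 0$ via Fatou's lemma.
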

\begin{proof}
Let $X_t = \pi_t(g^*)$. By assumption, $\frac{1}{X_t}$ is a positive supermartingale, which implies $\lim_{t \rightarrow \infty} \frac{1}{X_t} = \frac{1}{X}$ exists and is finite with probability one. By the Continuous Mapping Theorem, this implies $\lim_{t \rightarrow \infty} X_t = X$ and is non-zero with probability one. On the other hand, we have by the law of total expectation
\[ 1 \leq \E \left[ \frac{1}{X_T} \right] \leq \frac{1}{\pi(g^*)} - \sum_{t=0}^{T-1} \E[f(X_t)]. \]
for all $T \in \NN$, which implies that $\lim_{t\rightarrow \infty} \E[f(X_t)] = 0$.  By Fatou's lemma and the Continuous Mapping Theorem, we have 
\[ 0 = \lim_{t\rightarrow \infty} \E[f(X_t)] = \E \left[\lim_{t\rightarrow \infty} f(X_t) \right] = \E \left[f(X) \right] .\]
Thus, $f(X) = 0$ with probability one. Since $f$ has only two potential zeros at 0 and 1, and since $X > 0$ with probability one, we conclude that $X = 1$ with probability one.
\end{proof}

\noindent As an immediate corollary, we have that structural QBC is consistent.
\begin{thm}
\label{thm:0-1-consistency}
Suppose that $\G$ is finite, and Assumptions~\ref{assump:0-1-noise} and~\ref{assump:correction-feedback} hold. Then if structural QBC is run with a prior distribution $\pi$ in which $\pi(g^*) > 0$, we have $\lim_{t\rightarrow\infty}\pi_t(g^*) = 1$ almost surely.
\end{thm}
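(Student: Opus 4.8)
The plan is to treat this as a direct consequence of the three preceding lemmas, chained together through the tower property of conditional expectation; all of the analytic work has already been packaged into those lemmas. The only free parameter is $\beta$, so the first step is to fix $\beta \leq \lambda/2$, where $\lambda \in (0,1]$ is the constant furnished by Assumption~\ref{assump:0-1-noise}. This puts part (b) of Lemma~\ref{lem:0-1-noise} in force, giving a per-step drift bound $\Delta_t \geq \beta\lambda(1-\gamma_t)/2$.

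Next I would convert this into a clean $\F_{t-1}$-measurable inequality. Lemma~\ref{lem:0-1-noise} states $\E[1/\pi_t(g^*) \mid \F_{t-1}, q_t, a_t] = (1-\Delta_t)/\pi_{t-1}(g^*)$, so taking a further expectation over $(q_t,a_t)$ given $\F_{t-1}$, and using that $\pi_{t-1}(g^*)$ is $\F_{t-1}$-measurable, yields
\[ \E\left[\frac{1}{\pi_t(g^*)} \,\Big|\, \F_{t-1}\right] \ \leq \ \frac{1}{\pi_{t-1}(g^*)} - \frac{\beta\lambda}{2\,\pi_{t-1}(g^*)}\,\E[\,1-\gamma_t \mid \F_{t-1}\,]. \]
Here it is important that the $\gamma_t$ in Lemma~\ref{lem:0-1-noise} and the $\gamma_t$ in Lemma~\ref{lem:correction-feedback-shrinkage} denote the same random variable, namely $\pi_{t-1}(\{g : g(a_t) = g^*(a_t)\})$. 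I would then substitute the lower bound from Lemma~\ref{lem:correction-feedback-shrinkage}, $\E[1-\gamma_t \mid \F_{t-1}] \geq c\,\pi_{t-1}(g^*)^2(1-\pi_{t-1}(g^*))^2$ — the step where finiteness of $\G$ and Assumption~\ref{assump:correction-feedback} enter — and cancel one factor of $\pi_{t-1}(g^*)$ to obtain
\[ \E\left[\frac{1}{\pi_t(g^*)} \,\Big|\, \F_{t-1}\right] \ \leq \ \frac{1}{\pi_{t-1}(g^*)} - \frac{\beta\lambda c}{2}\,\pi_{t-1}(g^*)\,(1-\pi_{t-1}(g^*))^2. \]

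Finally I would set $f(x) = (\beta\lambda c/2)\, x\,(1-x)^2$, check that $f$ is continuous and non-negative on $[0,1]$ with $f(1)=0$ and $f(x)>0$ on $(0,1)$, and invoke Lemma~\ref{lem:general-consistency}; the finiteness of the starting value $1/\pi_0(g^*)$ needed there is guaranteed by the hypothesis $\pi(g^*)>0$. The conclusion $\pi_t(g^*) \to 1$ almost surely is then immediate.

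I do not anticipate a genuine obstacle, since the probabilistic heavy lifting lives in the lemmas; the only things to get right are bookkeeping. Specifically, I would be careful to apply the tower property in the correct order (condition on $\F_{t-1}, q_t, a_t$ first, then average out $q_t$ and $a_t$), to confirm that the two occurrences of $\gamma_t$ truly coincide, and to verify that the constructed $f$ satisfies the \emph{qualitative} hypotheses of Lemma~\ref{lem:general-consistency} — strict positivity on the open interval and a zero only at the endpoints — rather than merely being non-negative.
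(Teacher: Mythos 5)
Your proposal is correct and follows essentially the same route as the paper's own proof: chain Lemma~\ref{lem:0-1-noise}(b) with Lemma~\ref{lem:correction-feedback-shrinkage} via the tower property to get the drift inequality, then apply Lemma~\ref{lem:general-consistency} with $f(x) = \tfrac{c\beta\lambda}{2}\,x(1-x)^2$. Your extra care about fixing $\beta \leq \lambda/2$ and confirming that the two occurrences of $\gamma_t$ coincide is sound bookkeeping that the paper leaves implicit.
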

\begin{proof}
Combining Lemmas~\ref{lem:0-1-noise} and \ref{lem:correction-feedback-shrinkage}, we have
\begin{align*}
\E \left[\frac{1}{\pi_{t}(g^*)} \bigg\vert \F_{t-1}\right] &=  (1- \E[\Delta_t| \F_{t-1}] )\frac{1}{\pi_{t-1}(g^*)} \\
&\leq \left( 1 - \frac{c\beta \lambda \pi_{t-1}(g^*)^2 (1-\pi_{t-1}(g^*))^2}{2} \right)\frac{1}{\pi_{t-1}(g^*)} \\
&= \frac{1}{\pi_{t-1}(g^*)} -  \frac{c\beta \lambda \pi_{t-1}(g^*) (1-\pi_{t-1}(g^*))^2}{2} 
\end{align*}
Now $f(x) = \frac{c\beta \lambda x (1-x)^2}{2}$ meets all of the conditions of Lemma~\ref{lem:general-consistency}, which concludes the proof.
\end{proof}

In the appendix we provide rates of convergence.

\subsection{Consistency under general losses}

We now turn to analyzing structural QBC with general losses. As before, we will need to make some assumptions. The first is that the loss function is well-behaved.
\begin{assump}
The loss function is bounded, $0 \leq \ell(z,y) \leq B$, and Lipschitz in its first argument,$$\ell(z,y) - \ell(z',y) \leq C \|z - z'\| ,$$ 
for some constants $B, C > 0$.
\label{assump:loss}
\end{assump}
\noindent It is easily checked that this assumption holds for the three loss functions we mentioned earlier.

In the case of 0-1 loss, we assumed that for any atomic question $a$, the correct answer $g^*(a)$ would be given with higher probability than any incorrect answer. We now formulate an analogous assumption for the case of more general loss functions. Recall that $\eta(\cdot | a)$ is the conditional probability distribution over the user's answers to $a \in \A$ (we can also allow $\eta$ to also depend upon the context $q$, as we did before; here we drop the dependence for notational convenience). The expected loss incurred by $z \in \Z$ on this question is thus
\[ L(z,a) = \sum_y \eta(y | a) \, \ell(z,y) . \]
We will require that for any atomic question $a$, this expected loss is minimized when $z = g^*(a)$, and predicting any other $z$ results in excess expected loss that grows with the distance between $z$ and $g^*(a)$.
\begin{assump}
There exists a constant $\lambda > 0$ such that for any atomic question $a \in \A$ and any $z \in \Z$,
$$L(z,a) - L(g^*(a),a) \ \geq \ \lambda \|z - g^*(a)\|^2 .$$
\label{assump:margin}
\end{assump}
\noindent Let's look at what this assumption implies in some concrete settings.
\begin{itemize}
\item $0-1$ loss with $\Y = \Z = \{0,1\}$.
For any $z \in \{0,1\}$, we have
$$ L(z,a) 
= 
\sum_y \eta(y|a) \ell(z,y)
=
1 - \eta(z|a)
$$
and thus Assumption~\ref{assump:margin} is equivalent to requiring that for all $a \in \A$ and for $z \neq g^*(a)$,
$$ \eta(g^*(a)|a) - \eta(z|a) \geq \lambda .$$
This is identical to the earlier Assumption~\ref{assump:0-1-noise}.
\item Squared loss with $\Y = \{-1,1\}$ and $\Z \subset \R$.
Assumption~\ref{assump:margin} requires that for any $a \in \A$, 
$$ g^*(a) 
= \argmin_z L(z,a) 
= \argmin_z \sum_y \eta(y|a) (z-y)^2
= \argmin_z \E[(z-y)^2 | a]
= \E [y | a],
$$
where the expectation is over the choice of $y$ given $a$. If this holds, then for any $z$, by a standard bias-variance decomposition, $L(z,a) - L(g^*(a),a) = (z - g^*(a))^2$, so that $\lambda = 1$.
\item Logistic loss with $\Y = \{-1,1\}$ and $\Z = [-B,B]$. Fix any $a \in A$, and write $p = \eta(1|a)$. Then 
$$ L(z,a) 
=
\sum_y \eta(y|a) \ell(z,y)
= p \ln (1 + e^{-z}) + (1-p) \ln (1 + e^z) .$$
This is minimized by $g^*(a) = \ln p - \ln (1-p)$, and $L(g^*(a), a)$ is then the entropy of a coin with bias $p$. Now pick any other value of $z$, and define $q = 1/(1+e^{-z})$ so that $z = \ln q - \ln (1-q)$. A further calculation shows that $L(z,a) - L(g^*(a),a)$ is exactly the KL divergence $K(p,q)$. Using Pinsker's inequality to lower-bound this in terms of $(p-q)^2$, we then find that Assumption~\ref{assump:margin} is satisfied with 
$$\lambda = 2 \left( \frac{e^B}{(1 + e^B)^2} \right)^2 .$$
\end{itemize}
From these examples, it is clear that requiring $g^*(a)$ to be the minimizer of $L(z,a)$ is plausible if $\Z$ is a discrete space but much less so if $\Z$ is continuous. In general, we can only hope that this holds approximately. With this caveat in mind, we stick with Assumption~\ref{assump:margin} as a useful but idealized mathematical abstraction.

Given these notions, and recalling our earlier definitions of $\var(a; \pi)$ and $\var(q; \pi)$, we have the following general loss analogue of Lemma~\ref{lem:0-1-noise}.
\begin{lemma}
Suppose Assumptions~\ref{assump:loss} and \ref{assump:margin} hold. Take $\beta \leq \min(\lambda/(2C^2), 1/B)$. Suppose that at time $t$, the user provides feedback on an atomic question $a_t \in A(q_t)$ for which $\var(a_t; \pi_{t-1}) \geq \gamma$. Then
$$ \E \left[\frac{1}{\pi_t(g^*)} \bigg\vert \F_{t-1}, q_t, a_t \right] \ = \ (1- \Delta_t) \frac{1}{\pi_{t-1}(g^*)},$$
where $\Delta_t \geq \beta \gamma \lambda/2$.
\label{lem:general-noise}
\end{lemma}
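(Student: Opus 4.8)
The plan is to follow the template of the proof of Lemma~\ref{lem:0-1-noise}, but replace the discrete case analysis with a second-order expansion of the exponential update. First I would unwind the update rule. Writing $Z_t = \sum_{g} \pi_{t-1}(g)\exp(-\beta \ell(g(a_t),y_t))$ for the normalizing constant of~(\ref{eq:general-update}), we have $\pi_t(g^*) = \pi_{t-1}(g^*)\exp(-\beta\ell(g^*(a_t),y_t))/Z_t$, so that
\[ \frac{1}{\pi_t(g^*)} \ = \ \frac{1}{\pi_{t-1}(g^*)} \sum_{g} \pi_{t-1}(g)\, \exp\!\big(\beta\,\delta(g,y_t)\big), \qquad \delta(g,y) := \ell(g^*(a_t),y) - \ell(g(a_t),y). \]
Taking the conditional expectation over the user's response $y_t \sim \eta(\cdot\,|\,a_t,q_t)$ and swapping the (finite) sum over $g$ with the expectation over $y_t$ identifies the factor $(1-\Delta_t)$ in the lemma statement as $1 - \Delta_t = \E_{g \sim \pi_{t-1}} \E_{y_t}\big[\exp(\beta\,\delta(g,y_t))\big]$. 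The whole task is then to show this quantity is at most $1 - \beta\gamma\lambda/2$.

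Next I would bound the inner expectation for a fixed $g$. Since $\ell$ takes values in $[0,B]$, we have $|\beta\delta(g,y)| \le \beta B \le 1$ by the choice $\beta \le 1/B$, so the elementary inequality $e^x \le 1 + x + x^2$ (valid on $[-1,1]$) gives $\E_{y}[\exp(\beta\delta(g,y))] \le 1 + \beta\,\E_y[\delta(g,y)] + \beta^2\,\E_y[\delta(g,y)^2]$. For the first-order term, note $\E_y[\delta(g,y)] = L(g^*(a_t),a_t) - L(g(a_t),a_t) \le -\lambda\|g(a_t)-g^*(a_t)\|^2$ by Assumption~\ref{assump:margin}. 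For the second-order term, the (symmetrized) Lipschitz bound of Assumption~\ref{assump:loss} gives $|\delta(g,y)| \le C\|g(a_t)-g^*(a_t)\|$, hence $\E_y[\delta(g,y)^2] \le C^2\|g(a_t)-g^*(a_t)\|^2$. Combining and writing $r^2 = \|g(a_t)-g^*(a_t)\|^2$ yields $\E_y[\exp(\beta\delta(g,y))] \le 1 - \beta r^2(\lambda - \beta C^2)$, and the second condition $\beta \le \lambda/(2C^2)$ forces $\lambda - \beta C^2 \ge \lambda/2$, so the bound becomes $1 - \tfrac{\beta\lambda}{2}\|g(a_t)-g^*(a_t)\|^2$.

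Finally I would take the expectation over $g \sim \pi_{t-1}$ and connect to the query variance. The only remaining ingredient is the bias--variance identity: for any fixed $z$, $\E_{g\sim\pi_{t-1}}[\|g(a_t)-z\|^2] = \var(a_t;\pi_{t-1}) + \|\E_{g'}[g'(a_t)] - z\|^2 \ge \var(a_t;\pi_{t-1})$, since the posterior mean minimizes expected squared distance. Taking $z = g^*(a_t)$ gives $\E_g[\|g(a_t)-g^*(a_t)\|^2] \ge \var(a_t;\pi_{t-1}) \ge \gamma$, so averaging the per-$g$ bound over $\pi_{t-1}$ produces $1 - \Delta_t \le 1 - \beta\gamma\lambda/2$, i.e. $\Delta_t \ge \beta\gamma\lambda/2$, as claimed. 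The main obstacle, and the place where both restrictions on $\beta$ are consumed, is controlling $\E_y[\exp(\beta\delta)]$ with the right sign: the negative first-order term (from the margin assumption) must dominate the positive quadratic correction (controlled via Lipschitzness and boundedness), and it is precisely the interplay $\beta \le \min(\lambda/(2C^2),1/B)$ that makes this net contribution negative and proportional to the variance rather than merely to $\beta$.
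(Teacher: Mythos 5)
Your proposal is correct and follows essentially the same route as the paper's proof: unwind the update to identify $1-\Delta_t$ as $\E_{g,y}[\exp(\beta\,\delta(g,y))]$, apply $e^x \le 1+x+x^2$ on $[-1,1]$ using $\beta \le 1/B$, control the first- and second-order terms via Assumptions~\ref{assump:margin} and~\ref{assump:loss} respectively, and finish with the bias--variance inequality $\E_g\|g(a_t)-g^*(a_t)\|^2 \ge \var(a_t;\pi_{t-1}) \ge \gamma$ together with $\beta \le \lambda/(2C^2)$. The only difference is the (immaterial) order in which the condition $\lambda - \beta C^2 \ge \lambda/2$ is invoked.
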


\begin{proof}
Plugging in the update rule for the posterior distribution, we have
\begin{align*}
\E \left[ \frac{1}{\pi_t(g^*)} \bigg| \F_{t-1}, q_t, a_t \right]
&=
\sum_y \eta(y | a_t) \frac{\sum_{g \in \G} \pi_{t-1}(g) \exp(-\beta \cdot \ell(g(a_t),y))}{\pi_{t-1}(g^*) \exp(-\beta\cdot \ell(g^*(a_t),y))} \\
&=
\frac{1}{\pi_{t-1}(g^*)} \sum_y \eta(y | a_t) \sum_g \pi_{t-1}(g) \exp(-\beta\, [\ell(g(a_t),y)-\ell(g^*(a_t),y)]) .
\end{align*} 
To turn this expression into the form $(1-\Delta_t)\frac{1}{\pi_{t-1}(g^*)}$, define
$$ \Delta_t = \sum_y \eta(y | a_t) \sum_g \pi_{t-1}(g) (1 - \exp(-\beta\, [\ell(g(a_t),y)-\ell(g^*(a_t),y)])) .$$
Since $\beta \leq 1/B$, and losses lie in $[0,B]$, we have that $\beta [\ell(g(a_t),y)-\ell(g^*(a_t),y)]$ lies in the range $[-1,1]$. Using the inequality $e^x \leq 1 + x + x^2$ for $-1 \leq x \leq 1$, we then have
\begin{align*}
\Delta_t 
&\geq  
\sum_g \pi_{t-1}(g) \sum_y \eta(y | a_t) ( \beta [\ell(g(a_t),y)-\ell(g^*(a_t),y)] - \beta^2 [\ell(g(a_t),y)-\ell(g^*(a_t),y)]^2 ) \\
&= 
\sum_g \pi_{t-1}(g) \left( \beta [L(g(a_t), a_t) - L(g^*(a_t),a_t)] - \beta^2 \sum_y \eta(y | a_t) [\ell(g(a_t),y)-\ell(g^*(a_t),y)]^2 \right) \\
&\geq
\sum_g \pi_{t-1}(g) \left( \beta \lambda \|g(a_t) - g^*(a_t)\|^2 - \beta^2 \sum_y \eta(y | a_t) C^2 \|g(a_t)-g^*(a_t)\|^2 \right) \\
&=
(\beta \lambda - \beta^2 C^2) \sum_g \pi_{t-1}(g) \, \|g(a_t) - g^*(a_t)\|^2 \\
&\geq
(\beta \lambda - \beta^2 C^2) \mbox{var}(a_t; \pi_{t-1})
\ \geq \ 
\beta(\lambda - \beta C^2)\gamma ,
\end{align*}
where the second and third lines have used Assumptions~\ref{assump:loss} and \ref{assump:margin}.
\end{proof}
Similarly, we can also give a general loss analogue of Lemma~\ref{lem:shrinkage-posterior-mass}.
\begin{restatable}{lemma}{VarShrinkage}
\label{lem:var-shrinkage}
Suppose $\G$ is finite and Assumption~\ref{assump:correction-feedback} holds. Then there exists a constant $c>0$ such that for any round $t$ 
\[\E[\var(a_t; \pi_{t-1}) \, | \, \F_{t-1}] \geq c \, \pi_{t-1}(g^*)^3 (1-\pi_{t-1}(g^*))^2 \]
where $a_t$ is the atom the user provides feedback on and the expectation is taken over both the randomness of user's response and the randomness of structural QBC.
\end{restatable}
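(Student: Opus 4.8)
The plan is to reduce the statement to the disagreement-mass shrinkage already used in the 0-1 analysis, at the price of one extra factor of $\pi_{t-1}(g^*)$. Write $p^* = \pi_{t-1}(g^*)$ and, for an atom $a$, let $m(a) = \pi_{t-1}(\{g : g(a)\neq g^*(a)\})$, so that $1-\gamma_t = m(a_t)$. The bridge is a pointwise lower bound on the atom variance: keeping only the cross terms with $g^*$ in the defining double sum,
\[ \var(a;\pi_{t-1}) \ \geq \ p^* \sum_{g} \pi_{t-1}(g)\,\|g(a)-g^*(a)\|^2 \ \geq \ p^*\,\delta\,m(a), \]
where $\delta = \min\{ \|g(a)-g^*(a)\|^2 : g\in\G,\ a\in\A,\ g(a)\neq g^*(a)\}$. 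Since $\G$ is finite (and only the finitely many atoms in the support of $\nu$ are ever queried), this is a minimum of finitely many strictly positive numbers, so $\delta>0$. Applying this at $a=a_t$, taking conditional expectations, and using that $p^*$ is $\F_{t-1}$-measurable gives $\E[\var(a_t;\pi_{t-1})\mid\F_{t-1}] \geq p^*\,\delta\,\E[1-\gamma_t\mid\F_{t-1}]$. It therefore suffices to prove a disagreement-mass bound $\E[1-\gamma_t\mid\F_{t-1}] \geq c'\,(p^*)^2(1-p^*)^2$, the conclusion of Lemma~\ref{lem:correction-feedback-shrinkage}; multiplying by $p^*\delta$ then yields the claimed $c\,(p^*)^3(1-p^*)^2$, and the extra factor of $p^*$ relative to Lemma~\ref{lem:correction-feedback-shrinkage} is exactly what the reduction supplies.

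The remaining work is to establish this disagreement-mass bound under the general-loss query rule, where a query $q$ is selected with probability proportional to $\nu(q)\,\var(q;\pi_{t-1})$ rather than $\nu(q)\,u(q;\pi_{t-1})$. I would re-run the argument behind Lemma~\ref{lem:correction-feedback-shrinkage} with $\var$ in place of $u$. For each non-target $g$, fix a query $q_g$ in the support of $\nu$ containing an atom on which $g$ disagrees with $g^*$, and consider the event that the algorithm displays $g_t = g$, selects $q_t = q_g$, and (by Assumption~\ref{assump:correction-feedback}, with probability at least $p_o$) the expert corrects an atom $a_t$ with $g(a_t)\neq g^*(a_t)$; on this event $1-\gamma_t \geq \pi_{t-1}(g)$. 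Two estimates make the selection step go through under the new weighting: pairing with $g^*$ again gives $\var(q_g;\pi_{t-1}) \gtrsim p^*\,\pi_{t-1}(g)\,\delta / |A(q_g)|$ from below, while $\var(a;\pi_{t-1})\leq D\,m(a)$ together with $\sum_{a\in A(q)} m(a)\leq |A(q)|(1-p^*)$ gives the normalizer bound $\sum_q \nu(q)\,\var(q;\pi_{t-1}) \leq D(1-p^*)$. Summing the (disjoint) events $\{g_t = g\}$ over the non-target structures and applying the power-mean inequality $\sum_{g\neq g^*}\pi_{t-1}(g)^3 \geq (1-p^*)^3/(|\G|-1)^2$ supplies the required $(1-p^*)^2$ behavior.

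The main obstacle is this transport of the shrinkage lemma to the variance-weighted query rule, and within it the regime $p^*\to 1$. There the selection weight of an informative query and the normalizer both tend to zero, so a crude normalizer estimate (e.g. $\var(q;\pi_{t-1})\leq D/2$) loses a factor of $1-p^*$ and gives only the weaker exponent $(1-p^*)^3$. The point is that $\sum_q\nu(q)\,\var(q;\pi_{t-1})\leq D(1-p^*)$ shows the two quantities vanish at the same rate, keeping the conditional probability of selecting $q_g$ bounded below by a quantity of order $p^*\,\pi_{t-1}(g)$. Everything else—the reduction through $\delta$, the finiteness needed both for $\delta>0$ and for the coverage of each disagreement by some $q_g$ in the support of $\nu$, and the boundedness of prediction distances by $D$—is routine bookkeeping absorbed into the constant $c$.
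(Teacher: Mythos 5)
Your proposal is correct, but it reaches the bound by a genuinely different route than the paper. You share the first ingredient: your pointwise bridge $\var(a;\pi_{t-1}) \geq \pi_{t-1}(g^*)\,\delta\, m(a)$ is exactly the right-hand half of the two-sided comparison $\tfrac{1}{D}\var(a;\pi_{t-1}) \leq \pi_{t-1}(\{g: g(a)\neq g^*(a)\}) \leq \var(a;\pi_{t-1})/(d_o\,\pi_{t-1}(g^*))$ that opens the paper's proof (your $\delta$ is its $d_o$). After that you diverge. The paper stays at the level of the selected query: it splits $A(q_t)$ into low- and high-variance atoms relative to $V_t=\var(q_t;\pi_{t-1})$, uses a union bound plus Assumption~\ref{assump:correction-feedback} to show the corrected atom has variance $\Omega(V_t^2)$ in expectation, applies Jensen to get $\E[V_t]^2$, and then invokes the query-level bound $\E[V_t\mid\F_{t-1}]\geq c\,\pi_{t-1}(g^*)(1-\pi_{t-1}(g^*))$ (Lemma~\ref{lem:variance-SQBC-query}); this yields the exponent $3$ with constants depending only on $\nu_o, A_o, D, d_o, p_o$. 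You instead lower-bound $\E[1-\gamma_t\mid\F_{t-1}]$ directly by a per-structure accounting: a designated query $q_g$ for each $g\neq g^*$, the selection-probability estimate $\var(q_g;\pi_{t-1})\gtrsim \pi_{t-1}(g^*)\pi_{t-1}(g)\delta/A_o$, and--- the genuinely new ingredient---the normalizer bound $\E_{q\sim\nu}[\var(q;\pi_{t-1})]\leq D(1-\pi_{t-1}(g^*))$, which follows from $\var(a;\pi)\leq D\,m(a)$ and is what rescues the $p^*\to 1$ regime. The paper never needs this bound because it controls the normalizer through the ratio $\E_{q\sim\nu}[\var(q)^2]/\E_{q\sim\nu}[\var(q)]\geq\E_{q\sim\nu}[\var(q)]$. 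The trade-off: your chain actually delivers the slightly stronger exponent $\pi_{t-1}(g^*)^2(1-\pi_{t-1}(g^*))^2$, but the power-mean step $\sum_{g\neq g^*}\pi_{t-1}(g)^3\geq(1-\pi_{t-1}(g^*))^3/(|\G|-1)^2$ injects a $1/(|\G|-1)^2$ into the constant, which the paper's argument avoids entirely. Since the lemma only asserts existence of some $c>0$ for finite $\G$, both versions establish the stated claim; note only that both (yours explicitly, the paper's implicitly via its $\nu_o/A_o$ step) rely on every $g\neq g^*$ being separated from $g^*$ by an atom of some query in the support of $\nu$.
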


The proof of Lemma~\ref{lem:var-shrinkage} is deferred to the appendix. With Lemmas~\ref{lem:general-noise} and \ref{lem:var-shrinkage} in hand, we get the consistency of general-loss structural QBC as a corollary.

\begin{thm}
\label{thm:general-consistency}
Suppose $\G$ is finite and the user's feedback satisfies Assumptions~\ref{assump:correction-feedback}, \ref{assump:loss}, and~\ref{assump:margin}. If the general loss version of structural QBC is run with a prior distribution $\pi$ in which $\pi(g^*) > 0$, then $\lim_{t\rightarrow\infty}\pi_t(g^*) = 1$ almost surely.
\end{thm}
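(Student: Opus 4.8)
The plan is to follow exactly the template of Theorem~\ref{thm:0-1-consistency}, substituting the general-loss ingredients for the 0-1 ones. All three building blocks are already available: Lemma~\ref{lem:general-noise} supplies the per-round drift on $1/\pi_t(g^*)$, Lemma~\ref{lem:var-shrinkage} lower-bounds the expected variance of the atom the user actually corrects, and Lemma~\ref{lem:general-consistency} converts a supermartingale-drift inequality into almost-sure convergence. The theorem is thus obtained as a short corollary by chaining these facts, just as in the 0-1 case.

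First I would fix $\beta \leq \min(\lambda/(2C^2), 1/B)$ so that Lemma~\ref{lem:general-noise} applies, and invoke it with $\gamma = \var(a_t; \pi_{t-1})$. Conditioning on $\F_{t-1}, q_t, a_t$ (so that the expectation over the response $y$ has already been taken), this gives
$$ \E\left[\frac{1}{\pi_t(g^*)} \,\middle|\, \F_{t-1}, q_t, a_t\right] = (1-\Delta_t)\frac{1}{\pi_{t-1}(g^*)}, \qquad \Delta_t \geq \frac{\beta\lambda}{2}\var(a_t; \pi_{t-1}). $$
Next I would take a tower expectation, conditioning only on $\F_{t-1}$ and averaging over the query $q_t$ and the atom $a_t$ the user returns. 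Since $\pi_{t-1}(g^*)$ is $\F_{t-1}$-measurable,
$$ \E\left[\frac{1}{\pi_t(g^*)} \,\middle|\, \F_{t-1}\right] = \bigl(1 - \E[\Delta_t \mid \F_{t-1}]\bigr)\frac{1}{\pi_{t-1}(g^*)} \leq \left(1 - \frac{\beta\lambda}{2}\,\E[\var(a_t; \pi_{t-1}) \mid \F_{t-1}]\right)\frac{1}{\pi_{t-1}(g^*)}. $$

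Plugging in the variance-shrinkage bound $\E[\var(a_t; \pi_{t-1}) \mid \F_{t-1}] \geq c\,\pi_{t-1}(g^*)^3(1-\pi_{t-1}(g^*))^2$ from Lemma~\ref{lem:var-shrinkage} then yields
$$ \E\left[\frac{1}{\pi_t(g^*)} \,\middle|\, \F_{t-1}\right] \leq \frac{1}{\pi_{t-1}(g^*)} - \frac{c\beta\lambda}{2}\,\pi_{t-1}(g^*)^2(1-\pi_{t-1}(g^*))^2. $$
To finish, I would set $f(x) = \frac{c\beta\lambda}{2}x^2(1-x)^2$ and check the hypotheses of Lemma~\ref{lem:general-consistency}: $f$ is continuous and non-negative on $[0,1]$, vanishes exactly at $0$ and $1$, and is strictly positive on $(0,1)$. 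The displayed inequality is precisely the required drift condition, so Lemma~\ref{lem:general-consistency} gives $\pi_t(g^*) \to 1$ almost surely.

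The genuine content of the result lives in the two deferred lemmas (the variance-drift bound of Lemma~\ref{lem:general-noise} and the variance-shrinkage bound of Lemma~\ref{lem:var-shrinkage}); once these are granted, the corollary itself is routine. The one point that needs care is the order of conditioning: Lemma~\ref{lem:general-noise} is stated for a fixed realized pair $(q_t, a_t)$ after integrating over the user's noisy answer $y$, whereas Lemma~\ref{lem:var-shrinkage} controls $\var(a_t; \pi_{t-1})$ only in expectation over the choice of the corrected atom, so the two must be combined in the correct nested order. I would also note that the variance bound carries one extra factor of $\pi_{t-1}(g^*)$ relative to the 0-1 analogue (power three rather than two); after dividing through by $\pi_{t-1}(g^*)$ this leaves $f(x) \propto x^2(1-x)^2$, whose only zeros in $[0,1]$ are still $0$ and $1$, which is all that Lemma~\ref{lem:general-consistency} requires.
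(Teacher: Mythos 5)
Your proposal is correct and follows exactly the route the paper intends: the paper states Theorem~\ref{thm:general-consistency} as an immediate corollary of Lemmas~\ref{lem:general-noise} and~\ref{lem:var-shrinkage} via Lemma~\ref{lem:general-consistency}, mirroring the proof of Theorem~\ref{thm:0-1-consistency}, and your chaining of the drift bound with the variance-shrinkage bound (yielding $f(x)=\tfrac{c\beta\lambda}{2}x^{2}(1-x)^{2}$ after dividing out one factor of $\pi_{t-1}(g^*)$) is precisely that argument. Your remarks on the order of conditioning and on the extra power of $\pi_{t-1}(g^*)$ are accurate and do not change the conclusion.
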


\section{Experiments}
\label{sec:expts}

We now turn to our experiments with structural QBC in a variety of applications. Before we do so, we first consider a way to speed up the query selection procedure.

\subsection{Reducing the randomness in structural QBC}

\begin{figure}
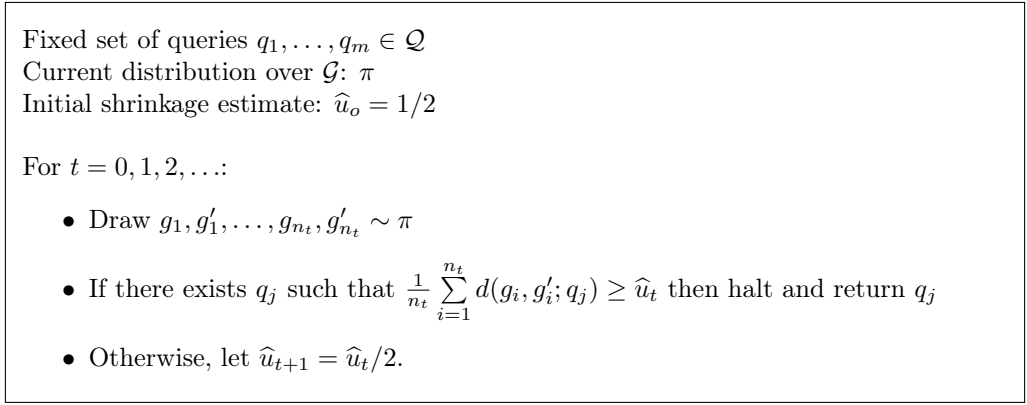

\begin{center}
\fbox{
\begin{minipage}[t]{5.2in}
\vskip.1in

Fixed set of queries $q_1, \ldots, q_m \in \Q$

Current distribution over $\G$: $\pi$

Initial shrinkage estimate: $\widehat{u}_o = 1/2$
\\

For $t = 0, 1, 2, \ldots$:
\begin{itemize}
\item Draw $g_1, g'_1, \ldots, g_{n_t}, g'_{n_t} \sim \pi$
\item If there exists $q_j$ such that $\frac{1}{n_t} \sum \limits_{i=1}^{n_t} d(g_i, g'_i; q_j) \geq \widehat{u}_t$
 then halt and return $q_j$
\item Otherwise, let $\widehat{u}_{t+1} = \widehat{u}_t/2$.
\end{itemize}
\vskip.1in
\end{minipage}}
\end{center}
\caption{Robust query selection for structural QBC, under 0-1 loss.}
\label{fig:robust-query}
\end{figure}

It is easy to see that the query selection procedure of structural QBC is a rejection sampler where each query $q$ is chosen with probability proportional to $\nu(q) u(q; \pi_t)$ (in the case of the zero-one loss) or $\nu(q) \var(q; \pi_t)$ (for general losses). However, without knowing the normalization constant, it is possible for the rejection rate to be quite high, even when there are many queries that have much higher uncertainty or variance than the rest. To circumvent this issue, we introduce a `robust' version of structural QBC, wherein many candidate queries are sampled, and the query that has the highest uncertainty or variance is chosen.


In the zero-one loss case, we can estimate the uncertainty of a candidate query $q$ by first drawing many pairs $g_1, g'_1, \ldots, g_{n}, g'_{n} \sim \pi_t$ and then using the unbiased estimator
\[ \widehat{u}(q;\pi_t) := \frac{1}{n} \sum_{i=1}^n d(g_i, g'_i; q). \]
By Hoeffding's inequality, this quantity concentrates tightly around the true uncertainty of $q$. Unfortunately, the number of structures we need to sample in order to identify the best candidate depends on the uncertainty of that candidate, which we do not know a priori. To circumvent this difficulty, we can start with an optimistic estimate of the largest uncertainty and then iteratively halve our estimate until we are confident that we have found a query with at least that much uncertainty. If the appropriate number of structures are sampled at each round, then it can be shown that this procedure terminates with a candidate query whose uncertainty is within a constant factor of the highest uncertainty~(this is very similar to Lemma 5 of \cite{TD17}). The full algorithm for the zero-one loss case is shown in Figure~\ref{fig:robust-query}.

In experiments, we have found that simply drawing a large number of candidate structures and choosing the query with the highest empirical uncertainty over these works quite well.

\subsection{Clustering}
\begin{figure}[t!]
    \centering
    \begin{subfigure}
        \centering
        \includegraphics[width=5.5in]{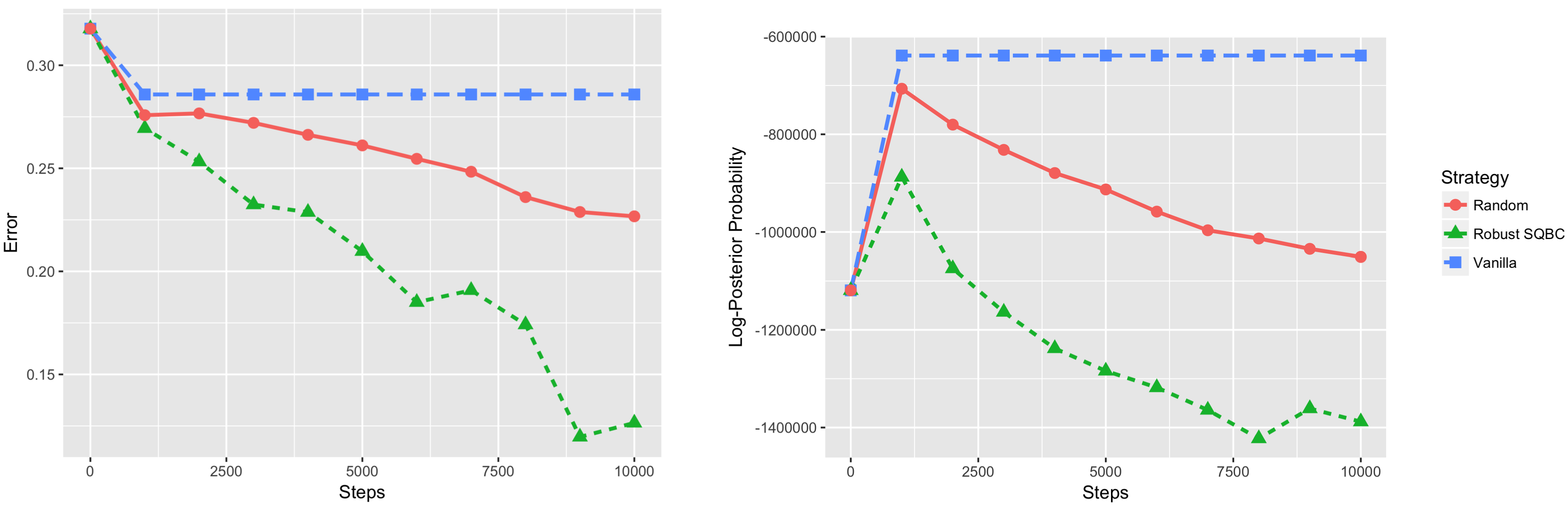} 
        \caption{Mixture of Gaussians experiments on {\tt wine} data set. The x-axis corresponds to full passes of the Gibbs sampler. The hyperparameters used were $\alpha=1$, $\sigma=2$, $\sigma_0 = 4$. \label{fig: wine}}
        \vspace{1.5em}
    \end{subfigure}%
    \begin{subfigure}
        \centering
        \includegraphics[width=5.5in]{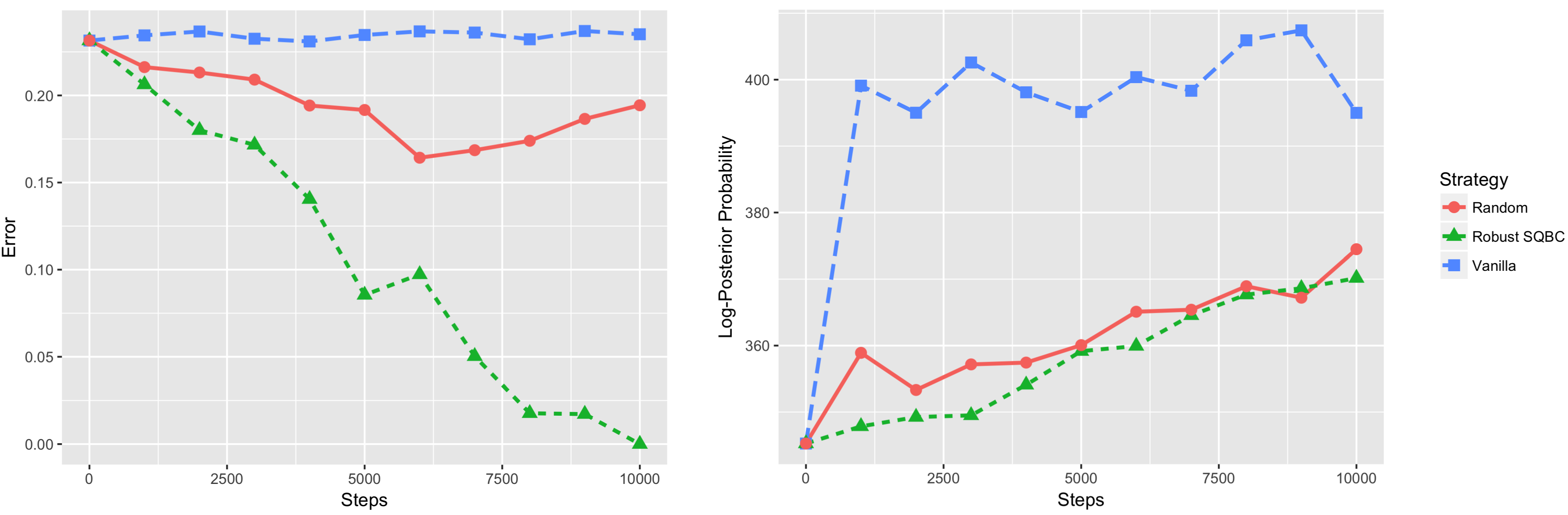} 
	\caption{Mixture of Gaussians experiments on {\tt iris} data set. The x-axis corresponds to full passes of the Gibbs sampler. The hyperparameters used were $\alpha=1$, $\sigma=1$, $\sigma_0 = 2$. \label{fig: iris}}
    \end{subfigure}
\end{figure}

In the case of flat clustering, queries can be taken to be restrictions of the current clustering restricted to a subset of the data points, and feedback can consist of must-link/cannot-link constraints over those data points. In each of our simulations, we ran the robust version of structured QBC where the candidate queries are subsets of size ten and feedback consists of a random correction to the proposed subset clustering. Below we describe the clustering models we used.

\paragraph{Mixture of Gaussians} Consider the following Bayesian generative model for a mixture of $k$ spherical Gaussians. 
\begin{itemize}
	\item Weight vector $(w_1, \ldots, w_k)$ is drawn from a symmetric Dirichlet distribution with parameter $\alpha > 0$
	\item Means $\mu^{(1)}, \ldots, \mu^{(k)} \in \R^d$ are drawn i.i.d. from $\N(\mu_o, \sigma_o^2 I_d)$
	\item For each data point $i=1,\ldots,n$:
	\begin{itemize}
		\item $z_i \in \{ 1, \ldots, k\}$ is drawn from $\mbox{Categorical}(w_1, \ldots, w_k)$
		\item $x^{(i)} \in \R^d$ is drawn from $\N(\mu_{z_i}, \sigma^2 I_d)$
	\end{itemize}
\end{itemize}
Here, $x^{(1)}, \ldots, x^{(n)}$ are the observed data points; $\alpha$, $\mu_o$, $\sigma_o^2$, and $\sigma^2$ are known hyper-parameters; and the $w$'s, $\mu$'s, and $z$'s are the unobserved latent variables. Note that the $z$'s induce a clustering over the data points. Our goal is to find the target clustering, with the Bayesian posterior distribution $\pr(z \, | \, x)$ acting as our prior $\pi$ over clusterings for structural QBC.

To sample from this posterior, we used the collapsed Gibbs sampler. Although this Markov chain can take exponential time to mix~\cite{TD14}, we found that running structural QBC with these samples often led to fast convergence to the underlying clustering.

We ran experiments on the {\tt wine} and {\tt iris} datasets from the UCI machine learning repository~\cite{L13}. In each of our experiments, we compared the robust structural QBC strategy (denoted `Robust SQBC') against two baseline strategies: 
\begin{itemize}
\item 'Random': feedback was provided on randomly sampled pairs of points
\item 'Vanilla': no feedback at all, just an unconstrained Gibbs sampler. 
\end{itemize}
The target clustering in each of the datasets was the one induced by the labels of the data points. Thus, in both of these datasets, the space of structures was clusterings containing at most 3 clusters.

For all strategies, we measured the clustering distance from the current clustering to the target clustering, i.e. the fraction of pairs of points the clusterings disagree on, and the log-posterior probability after every pass of the Gibbs sampler. For the feedback strategies, a constraint was added every 50 passes of the Gibbs sampler. The results are displayed in Figures~\ref{fig: wine} and~\ref{fig: iris}.

As can be seen in both of these experiments, clustering error does not perfectly match log-posterior probability. In the \texttt{iris} experiments, for example, we see that the SQBC model does converge upon the target clustering but its log-posterior probability is still much lower than that of the clusterings found by the pure Gibbs sampler. Thus, even though the prior distribution does not put much weight upon the target, the SQBC strategy is still able to find the target clustering after a few rounds of interaction.

\begin{figure}
\begin{center}
\includegraphics[width=5.5in]{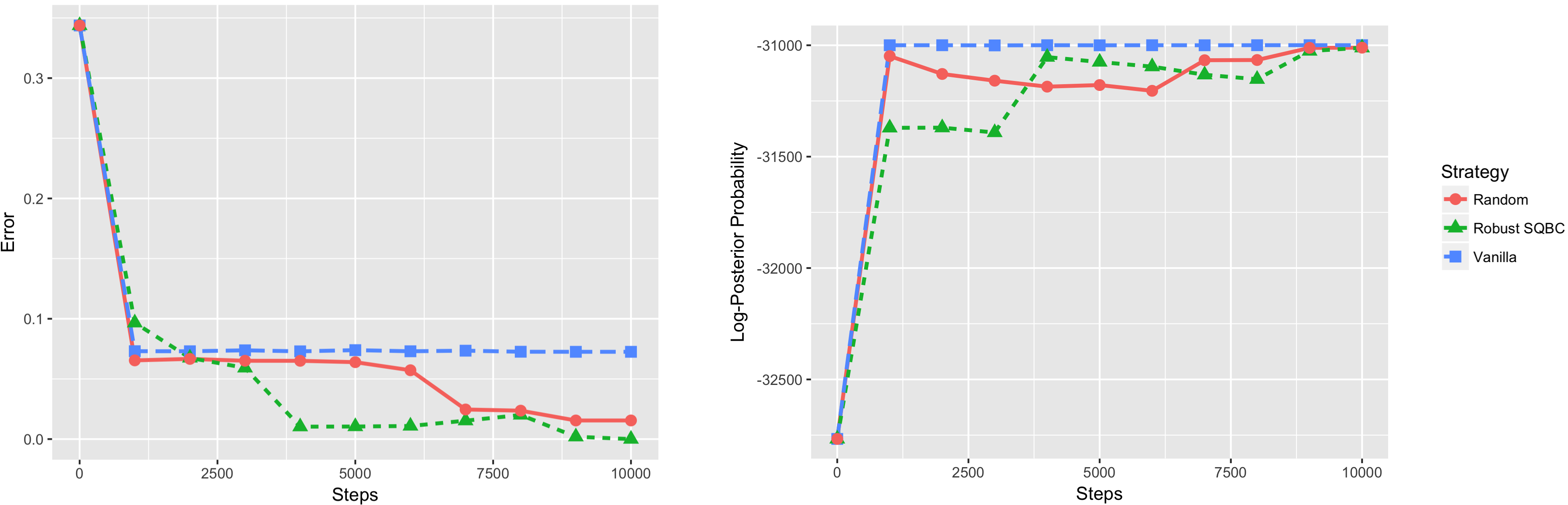}
\end{center}
\caption{Mixture of Bernoullis experiments on MNIST dataset. The hyperparameters $\alpha$, $\beta$, and $\gamma$ were all set to 1.}
\label{fig: mobs plots}
\end{figure}

\paragraph{Mixture of Bernoullis} Consider the following Bayesian generative model for a mixture of $k$ Bernoulli product distributions:
\begin{itemize}
	\item Weight vector $(w_1, \ldots, w_k)$ is drawn from a symmetric Dirichlet distribution with parameter $\alpha > 0$
	\item Bias variables $a_j^{(i)} \in [0,1]$ are drawn i.i.d. from $\mbox{Beta}(\beta, \gamma)$ for $i=1,\ldots, k$ and $j=1,\ldots, d$ 
	\item For each data point $i=1,\ldots,n$:
	\begin{itemize}
		\item $z_i \in \{ 1, \ldots, k\}$ is drawn from $\mbox{Categorical}(w_1, \ldots, w_k)$
		\item $x^{(i)}_j \in \{ 0, 1\}$ is drawn from $\mbox{Bern}(a^{(z_i)}_j)$ for $j=1,\ldots, d$
	\end{itemize}
\end{itemize}
Here, $x^{(1)}, \ldots, x^{(n)}$ are the observed $d$-dimensional binary-valued vectors, the $a$'s are unobserved $d$-dimensional real-valued vectors, and $\alpha$, $\beta$, and $\gamma$ are known hyperparameters.

We ran experiments on a binarized version of the MNIST handwritten digit dataset~\cite{LBBH98}. Here we randomly sampled 50 images from each of the 0, 1, and 2 classes and sought to recover the clusters induced by these classes. The results are presented in Figure~\ref{fig: mobs plots}.

\subsection{Linear separators}

We also considered the classical active learning setting of linear separators. In all our experiments, we used QBC with a spherical prior distribution $\N(0, I_d)$ and the squared-loss posterior update. 

\paragraph{Noisy linear simulations} When learning a linear separator under classification noise, there is a true linear separator $h^* \in \R^d$. When a point $x\in\R^d$ is queried, we observe 
\[ y = \begin{cases} 
\mbox{sign}(\langle h^*, x \rangle) &\text{ with probability } 1-p \\
-\mbox{sign}(\langle h^*, x \rangle) &\text{ with probability } p   \end{cases}\] 

In our simulations, we used various settings of both the noise level $p$ and the aggressiveness of the posterior update $\beta$. In the low noise setting, we found that setting $\beta$ large appears to be appropriate. But as the noise level grows, using a smaller $\beta$ appears to be advantageous. In all settings, however, QBC outperforms random sampling. Figure~\ref{fig: linear plots} shows some results of our simulations. The rest appear in the appendix.

\begin{figure}[t!]
    \centering
    \begin{subfigure}
        \centering
        \includegraphics[width=6in]{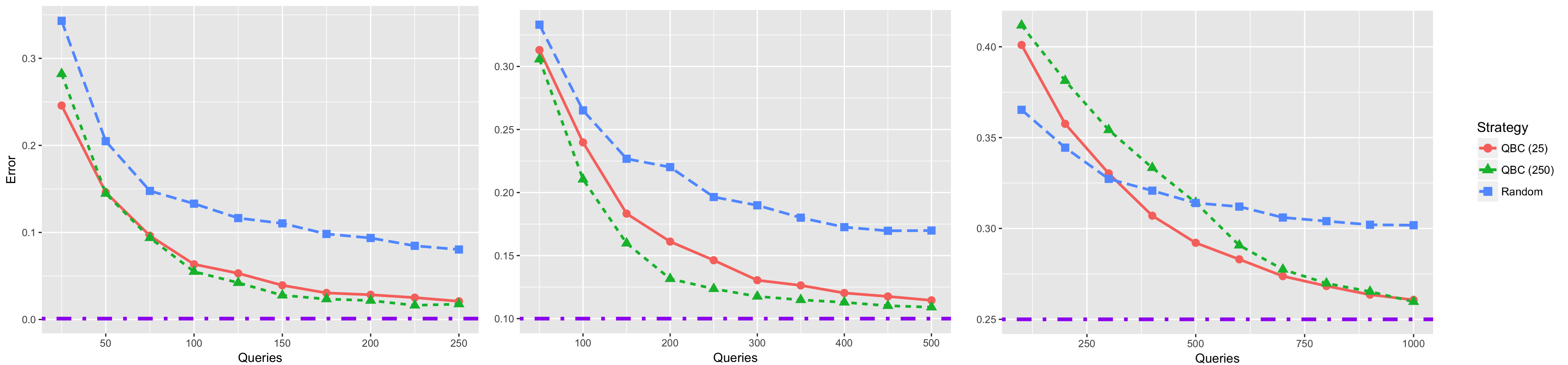}
        \caption{Simulations under different settings of the classification noise $p$. The dashed purple line is the level of classification noise.  In the legend, QBC run with posterior update $\beta$ is shown as `QBC $(\beta)$.' \label{fig: linear plots}}
        \vspace{1.5em}
    \end{subfigure}%
    \begin{subfigure}
        \centering
        \includegraphics[width=5.5in]{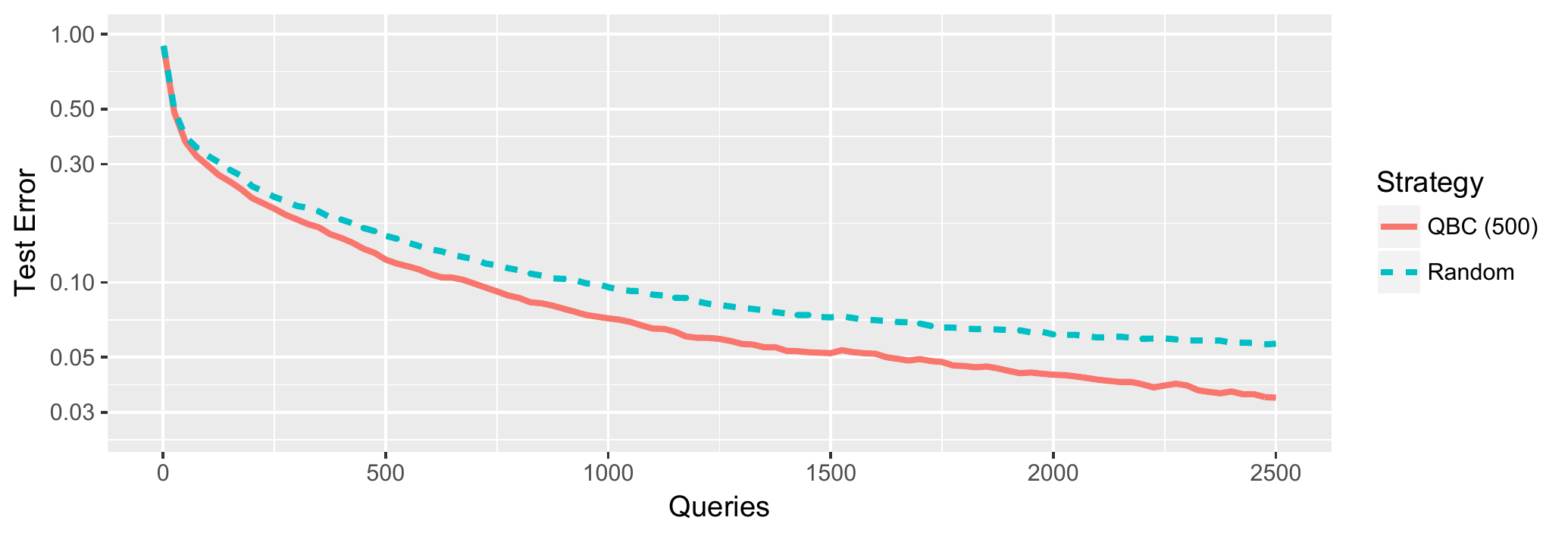}
        \caption{Kernel experiments on MNIST. As in Figure~\ref{fig: linear plots}, QBC run with posterior update $\beta$ is shown as `QBC $(\beta)$' in the legend. Note that the test error axis is log-scale.\label{fig: MNIST plots}}
    \end{subfigure}
\end{figure}

\paragraph{MNIST kernel experiments} We used the full MNIST handwritten digit dataset for our linear classification experiment. For this multiclass classification task, we used interactively trained a collection of ten one-vs-all classifiers using the kernelized squared-loss approach outlined in Section~\ref{sec:kernels}. We used an RBF kernel $K(x, x') = \exp(-\gamma\|x - x'\|^2)$ with the choice of $\gamma = 0.001$. Because we are observing the true labels, we found that large values of $\beta$ worked well for this task. Figure~\ref{fig: MNIST plots} shows the results in this setting. In particular, the experiments show the error obtained after 2500 random labels was obtained after just 1250 actively-sampled labels.

\section{Related work}
\label{sec:related}

In this section, we discuss two areas of work related to the current manuscript: active learning and interactive clustering.

\paragraph{Active learning.} In pool-based active learning, a learner starts with a large collection, or pool, of unlabeled data and adaptively queries points for their labels. Several variations have been studied.

One variant can be thought of as {\em nonparametric active learning}, in which the learning algorithm requests a few of the labels and then fills in all the rest. The objective is to accurately label the entire pool of data while making relatively few queries. One strategy for this problem is to build a neighbor graph over the unlabeled data points and to use this graph to propagate labels from any points that get queried to the remaining points~\cite{ZLG03, CGV09, DNZ15}. In another strategy, a hierarchical partition of the data points is created, and the learner moves down the tree, randomly querying points in each node until it is confident that it has found pure clusters~\cite{DH08}. In general, the success of methods in the nonparameteric setting usually depends on some smoothness property of the underlying data distribution~\cite{CN08, KUB15}.

The other major variant of active learning uses queries to choose a concept from some fixed concept class $\mathcal{H}$. Active learning strategies for this setting range from {\em mellow}~\cite{CAL94, DHM07, BBZ07}, where a label is requested for any point whose label cannot be inferred from those already queried, to {\em aggressive}, where maximally informative points are sought for querying. In the latter category, informativeness can be quantified in a variety of ways. In some strategies, like query-by-committee~\cite{SOS92,FSST97} and generalized binary search~\cite{D04, GB09, N11}, informative points are those that eliminate many hypotheses. Other strategies, like the splitting-index algorithm~\cite{D05} and the related DBAL algorithm~\cite{TD17}, query points that eliminate very different hypotheses.  

The work presented here bears resemblance to both of these settings. On the one hand, our definition of structure is general enough to accommodate arbitrary labelings of a data set, as in nonparametric active learning. However, the algorithms considered here are much more closely related to those in the parametric setting. In particular, the algorithms considered in Section~\ref{sec:sqbc} are clear generalizations of the query-by-committee algorithm, while the robustified algorithm presented in Section~\ref{sec:expts} is in some sense an interpolation between the generalized binary search algorithm and the DBAL algorithm. However, where this work most clearly departs from the classical active learning literature is the change from question-answering to partial correction feedback.

The analysis we present in this paper is closely related to that of the original query-by-committee algorithm~\cite{FSST97}; in particular, it uses a very similar notion of uncertainty. Our results are also very close in spirit to those obtained for generalized binary search~\cite{N11}. In fact, the latter work is able to characterize the query complexity of binary active learning using certain geometric quantities, and it is an interesting open problem whether something similar can be done in our structural setting.

\paragraph{Interactive clustering.} Another area relevant to this work is interactive clustering, which has also developed along several fronts.

One interactive clustering model considered in the literature allows users to split and merge clusters in the algorithm's current clustering until the target clustering has been found. In order to limit the amount of work required of the user, split requests are not allowed to specify how a cluster is split, only that a cluster should be split, and the algorithm must best decide how to go about splitting the chosen cluster. Under certain assumptions on the user's feedback and the target clustering, various algorithms have been shown to recover the target clustering while needing relatively few rounds of interaction with the user~\cite{BB08, AZ10, ABV14}.

Another approach to interactive clustering is the {\em clustering with constraints} framework. In this model, a user provides constraints that the target clustering satisfies, and the algorithm attempts to find a clustering that both satisfies these constraints as well as optimizes some cost function. In the flat clustering setting, the constraints are very often must-link/cannot-link pairs~\cite{WC00}; while in the hierarchical clustering setting, there has been work on constraints that are ordered triplets indicating which points are closer in tree distance~\cite{VD16}. In the case where the target flat clustering optimizes the k-means cost function, it has been shown that relatively few queries can transform an NP-hard unsupervised learning problem into a tractable interactive learning problem~\cite{AKB16}.

As discussed in Section~\ref{sec:structure-learning}, the constraints in both the flat clustering and hierarchical clustering models can be written using in our interactive structure learning framework. It is less clear, however, how to incorporate split and merge feedback into our model. It thus remains an interesting open problem to marry fine-grained interaction such as must-link/cannot-link constraints with higher-level feedback such as cluster splits and merges.

\subsection*{Acknowledgments}

Part of this work was done at the Simons Institute for Theoretical Computer Science, Berkeley, during the ``Foundations of Machine Learning'' program.

\bibliographystyle{plain}
\bibliography{references}

\newpage
\appendix

\section{Deferred Proofs}

\subsection{Proof of Lemma~\ref{lem:gaussian-posterior}}

\GaussianPosteriorLemma*
\begin{proof}
By expanding the form of $\N(\widehat{\mu}, \widehat{\Sigma})$, we first see
\begin{align*}
\N(w \, | \, \widehat{\mu}, \widehat{\Sigma}) &\propto \exp\left(- \frac{1}{2}(w-\widehat{\mu})^T \widehat{\Sigma}^{-1} (w-\widehat{\mu})  \right) \\ \displaybreak[3]
&= \exp\left(- \frac{1}{2} w^T \widehat{\Sigma}^{-1} w - \frac{1}{2}\widehat{\mu}^T \widehat{\Sigma}^{-1} \widehat{\mu} + w^T \widehat{\Sigma}^{-1} \widehat{\mu} \right)\\ \displaybreak[3]
&\propto \exp\left(- \frac{1}{2}w^T \widehat{\Sigma}^{-1} w +   w^T \widehat{\Sigma}^{-1}(2 \beta \widehat{\Sigma} \Phi^T y)  \right) \\ \displaybreak[3]
&= \exp\left(- \frac{1}{2}w^T( 2 \beta \Phi^T \Phi + \frac{1}{\sigma_o^2} I ) w +  2 \beta  w^T \Phi^T y \right) \\ \displaybreak[3]
&= \exp\left(- \beta\left(w^T \Phi^T \Phi w -  2 w^T \Phi^T y \right) - \frac{\|w\|^2}{2\sigma_o^2} \right)
\end{align*}
On the other hand, we have
\begin{align*}
\pi_t(w) &\propto \exp\left(- \beta \| y - \Phi w \|^2 - \frac{\|w\|^2}{2\sigma_o^2}  \right) \\
&= \exp\left(-\beta \left(y^Ty + (\Phi w)^T \Phi w - 2 (\Phi w)^T y \right)  - \frac{\|w\|^2}{2\sigma_o^2} \right) \\
&\propto \exp\left(- \beta\left( w^T \Phi^T \Phi w - 2 w^T \Phi^T y \right) - \frac{\|w\|^2}{2\sigma_o^2} \right)
\end{align*}
Thus $\pi_t = \N(\widehat{\mu}, \widehat{\Sigma})$.
\end{proof}

\subsection{Proof of Lemma~\ref{lem:correction-feedback-shrinkage}}

\begin{lemma}
\label{lem:shrinkage-posterior-mass}
There exists a constant $c>0$ such that the random variable $U_t =  u(q_{t}; \pi_{t-1})$ satisfies \[ \E[U_t \, | \, \F_{t-1}] \geq c \, \pi_{t-1}(g^*) (1-\pi_{t-1}(g^*)) \] for any round $t$ where $q_t$ is the query shown to the user at time $t$ and the expectation is taken over the randomness of structural QBC.
\end{lemma}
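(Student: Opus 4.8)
The plan is to first determine the conditional law of the query $q_t$ given $\F_{t-1}$, then reduce the claim to a lower bound on the \emph{average} uncertainty $\E_{q \sim \nu}[u(q;\pi_{t-1})]$, and finally bound that average from below using local checkability. Throughout I write $\pi = \pi_{t-1}$.

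\emph{Step 1 (law of $q_t$ and a Jensen reduction).} As already observed, the query-selection loop is a rejection sampler in which a candidate $q \sim \nu$ is accepted with probability $\E_{g,g' \sim \pi}[d(g,g';q)] = u(q;\pi)$, so that $\pr(q_t = q \mid \F_{t-1}) = \nu(q)\,u(q;\pi) / \sum_{q'} \nu(q')\,u(q';\pi)$. Plugging this in, I would write
\[ \E[U_t \mid \F_{t-1}] \ = \ \frac{\sum_q \nu(q)\,u(q;\pi)^2}{\sum_q \nu(q)\,u(q;\pi)} \ = \ \frac{\E_{q \sim \nu}[u(q;\pi)^2]}{\E_{q \sim \nu}[u(q;\pi)]} \ \geq \ \E_{q \sim \nu}[u(q;\pi)], \]
where the inequality is just $\E[X^2] \geq (\E X)^2$. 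The denominator vanishes only if $u(q;\pi) = 0$ for every $q$ in the support of $\nu$, which (by the identifiability discussed below) forces $\pi(g^*) = 1$ and makes the claim trivial. So it suffices to lower-bound $\E_{q \sim \nu}[u(q;\pi)]$; intuitively, uncertainty-biased sampling can only increase the expected uncertainty relative to sampling directly from $\nu$.

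\emph{Step 2 (pass to atoms and bound pointwise).} I would introduce the induced distribution on atoms, $\mu(a) = \sum_{q : a \in A(q)} \nu(q)/|A(q)|$, which is a genuine probability measure because $\sum_{a \in A(q)} 1/|A(q)| = 1$; it satisfies $\E_{q \sim \nu}[u(q;\pi)] = \E_{a \sim \mu}[u(a;\pi)]$. Setting $\rho(a) = \pi(\{g : g(a) = g^*(a)\})$, the two disjoint events $\{g(a) = g^*(a),\, g'(a) \neq g^*(a)\}$ and its mirror image each force $g(a) \neq g'(a)$, so $u(a;\pi) \geq 2\rho(a)(1 - \rho(a))$. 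Since $\rho(a) \geq \pi(g^*)$ and $1 - \rho(a) \geq 0$, this yields the convenient one-sided bound
\[ u(a;\pi) \ \geq \ 2\,\pi(g^*)\,(1 - \rho(a)) \ = \ 2\,\pi(g^*)\,\pi(\{g : g(a) \neq g^*(a)\}), \]
which deliberately avoids the non-monotonicity of $\rho \mapsto \rho(1-\rho)$.

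\emph{Step 3 (exchange order and use identifiability).} Averaging over $a \sim \mu$ and swapping the order of summation gives
\[ \E_{a \sim \mu}[u(a;\pi)] \ \geq \ 2\,\pi(g^*) \sum_{g} \pi(g)\, \pr_{a \sim \mu}\!\big(g(a) \neq g^*(a)\big). \]
Local checkability together with the coverage of $\nu$ ensures that every $g \neq g^*$ disagrees with $g^*$ on some \emph{queryable} atom $a_g$ lying in the support of $\mu$; letting $\mu_{\min}$ be the smallest positive value of $\mu$, I get $\pr_{a \sim \mu}(g(a) \neq g^*(a)) \geq \mu(a_g) \geq \mu_{\min}$ for every such $g$. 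The sum is therefore at least $\mu_{\min}(1 - \pi(g^*))$, and combining with Steps 1--2 gives $\E[U_t \mid \F_{t-1}] \geq 2\mu_{\min}\,\pi(g^*)(1-\pi(g^*))$, i.e. $c = 2\mu_{\min}$. The only genuinely delicate point—the main obstacle—is establishing $\mu_{\min} > 0$ and that the witnessing atom $a_g$ is reachable: this is exactly where finiteness of $\G$, local checkability, and the assumption that $\nu$ covers all atoms enter. Everything else is the clean Jensen step of Step 1 and the elementary pointwise estimate of Step 2.
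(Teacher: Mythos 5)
Your proposal is correct and follows essentially the same route as the paper's proof: the identical rejection-sampler/Jensen step reducing the claim to $\E_{q\sim\nu}[u(q;\pi_{t-1})]$, followed by pulling out the pairs involving $g^*$ and invoking the fact that every $g\neq g^*$ is distinguished from $g^*$ on some queryable atom. Your passage to the atom marginal $\mu$ and the bound $u(a;\pi)\geq 2\rho(a)(1-\rho(a))$ is only a cosmetic repackaging of the paper's direct manipulation of $\E_{g,g'\sim\pi_{t-1}}[\E_{q\sim\nu}[d(g,g';q)]]$, yielding the constant $2\mu_{\min}$ in place of the paper's $\nu_o/A_o$.
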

\begin{proof}
Note that by the structural QBC strategy, the probability that $q \in \Q$ is selected is proportional to $\nu(q) \, u(q; \pi_{t-1})$. This implies
\[ \E[U_t \, | \, \F_{t-1}] = \frac{\E_{q\sim \nu} [ u(q; \pi_{t-1})^2]}{\E_{q\sim \nu} [ u(q; \pi_{t-1})]} \geq  \E_{q\sim \nu} [ u(q; \pi_{t-1})]. \]
Now take $\nu_o = \min_{q \in \Q} \nu(q)$ and $A_o = \max_{q \in \Q} |A(q)|$. Then for any $g \in \G$ s.t. $g \neq g^*$, there exists some $a \in \A$ such that $g(a) \neq g^*(a)$ which implies 
\[ \E_{q\sim \nu}[d(g,g^*;q)] = \sum_{q \in \Q} \frac{\nu(q)}{|A(q)|} \sum_{a \in A(q)} \ind[g(a) \neq g^*(a)] \geq \frac{\nu_o}{A_o}. \]
Then we have
\begin{align*}
\E[U_t \, | \, \F_{t-1}]  &\geq \E_{q\sim \nu} [ u(q; \pi_{t-1})] \\
&= \E_{g,g' \sim \pi_{t-1}}\left[\E_{q \sim \nu} [d(g,g';q)] \right] \\
&= \sum_{g, g'} \pi_{t-1}(g) \pi_{t-1}(g') \left[\E_{q \sim \nu} [d(g,g';q)] \right] \\
&\geq \pi_{t-1}(g^*) \sum_{g \neq g^*} \pi_{t-1}(g) \E_{q \sim \nu} [d(g,g^*;q) ] \\
&\geq \frac{\nu_o}{A_o} \pi_{t-1}(g^*) (1-\pi_{t-1}(g^*)) .
\end{align*}
\end{proof}

\CorrectionFeedbackShrinkage*
\begin{proof}
Suppose that structural QBC shows the user query $q_t$. Let $\epsilon > 0$ and $U_t = u(q_{t}; \pi_{t-1})$ be the random variable denoting the uncertainty of $q_t$. For an atom $a \in A(q_t)$, we say that $a$ is \emph{known} if
\[ \pi_{t-1}(\{g : g(a) \neq g^*(a)\}) < \epsilon \, U_t. \]
And we say that $a$ is \emph{unknown} otherwise. By a union bound, we have
\[ \pi_{t-1}(\{g : g(a) \neq g^*(a) \text{ for some known } a \in A(q_t) \}) \leq \epsilon\, U_t |A(q_t)|. \]
By Lemma~\ref{lem:shrinkage-uncertainty}, we have that for any $a \in \A$
\[ \pi_{t-1}(\{g : g(a) \neq g^*(a)\}) \geq \frac{1}{2} u(a ; \pi_{t-1}). \]
Moreover, there is \emph{some} unknown atom $a^* \in A(q_t)$ with $u(a^*; \pi_{t-1}) \geq U_t$, implying
\[ \pi_{t-1}(\{g : g(a^*) \neq g^*(a^*)\}) \geq \frac{1}{2} u(a^* ; \pi_{t-1}) \geq \frac{U_t}{2}.  \]
So with probability at least $U_t \left(\frac{1}{2} - \epsilon |A(q_t)| \right)$, a random draw from $\pi_{t-1}$ gets all the known atoms correct and some unknown atom incorrect. Then conditioned on this event occurring, the user has probability at least $p_o$ of correcting some unknown atom. So taking $\epsilon = \frac{1}{4 A_o}$ where $A_o = \max |A(q)|$, we have
\begin{align*}
\E\left[ 1- \gamma_t \, | \, \F_{t-1}  \right] &= \E\left[\pi_{t-1}(\{g : g(a_t) \neq g^*(a_t)\}) \, | \, \F_{t-1} \right] \\
&=\E\left[  \E\left[\pi_{t-1}(\{g : g(a_t) \neq g^*(a_t)\}) | u(q_{t} ; \pi_{t-1}) = U_t \right] \, | \, \F_{t-1} \right] \\
&\geq \E \left[ \frac{U_t^2 p_o}{16 A_o} \, \bigg| \, \F_{t-1}  \right] \geq c_o \E[U_t^2 \, | \, \F_{t-1} ] \geq c_o \E[U_t \, | \, \F_{t-1} ]^2
\end{align*}
where $c_o = p_o/(16 A_o)$ is a positive constant. But by Lemma~\ref{lem:shrinkage-posterior-mass}, we know $\E[U_t] \geq c \, \pi_{t-1}(g^*) (1-\pi_{t-1}(g^*))$ for some constant $c > 0$. The lemma follows by substitution.
\end{proof}

\subsection{Proof of Lemma~\ref{lem:var-shrinkage}}

\begin{lemma}
\label{lem:variance-SQBC-query}
Suppose $\G$ is finite. For each round $t$, the query $q_{t}$ under the structural QBC strategy satisfies 
\[ \E[\var(q_t; \pi_{t-1}) \, | \, \F_{t-1}] \geq c \, \pi_{t-1}(g^*) (1-\pi_{t-1}(g^*)). \]
\end{lemma}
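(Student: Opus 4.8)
The plan is to mirror the proof of Lemma~\ref{lem:shrinkage-posterior-mass}, the $0$-$1$ analogue, replacing the uncertainty $u$ by $\var$ and the disagreement $d$ by $d^2$ throughout. The query-selection loop of Figure~\ref{fig:structural-QBC-general} accepts a candidate $q\sim\nu$ with probability $d^2(g,g';q)$, so $q_t$ is drawn from $\nu$ size-biased by $\var(\cdot;\pi_{t-1})$; that is, $\pr(q_t = q \mid \F_{t-1}) \propto \nu(q)\,\var(q;\pi_{t-1})$. Writing $X = \var(q;\pi_{t-1})$ for $q\sim\nu$, this gives
\[ \E[\var(q_t;\pi_{t-1}) \mid \F_{t-1}] \ = \ \frac{\E_{q\sim\nu}[\var(q;\pi_{t-1})^2]}{\E_{q\sim\nu}[\var(q;\pi_{t-1})]} \ \geq \ \E_{q\sim\nu}[\var(q;\pi_{t-1})], \]
where the inequality is just $\E[X^2]\geq (\E X)^2$. (If $\pi_{t-1}$ is a point mass at $g^*$ both sides are $0$ and the claim is trivial, so assume otherwise.) It therefore suffices to lower-bound the unweighted average variance $\E_{q\sim\nu}[\var(q;\pi_{t-1})]$.

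First I would rewrite this average as a pairwise quantity. Unfolding the definition $\var(a;\pi) = \tfrac12\sum_{g,g'}\pi(g)\pi(g')\|g(a)-g'(a)\|^2$ and comparing with the definition of $d^2$, one checks that $\var(q;\pi) = \tfrac{D}{2}\,\E_{g,g'\sim\pi}[d^2(g,g';q)]$. Interchanging the two expectations and retaining only the nonnegative pair terms with $g'=g^*$ then yields, exactly as in Lemma~\ref{lem:shrinkage-posterior-mass},
\[ \E_{q\sim\nu}[\var(q;\pi_{t-1})] \ \geq \ \frac{D}{2}\,\pi_{t-1}(g^*)\sum_{g\neq g^*}\pi_{t-1}(g)\,\E_{q\sim\nu}[d^2(g,g^*;q)]. \]

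The crux is a uniform positive lower bound on $\E_{q\sim\nu}[d^2(g,g^*;q)]$ over all $g\neq g^*$. In the $0$-$1$ case a single disagreeing atom contributes a fixed $1/|A(q)|$, giving the clean bound $\nu_o/A_o$; here the same atom contributes $\|g(a)-g^*(a)\|^2/D$, which must itself be bounded away from $0$. This is precisely where finiteness of $\G$ enters: set $D_{\min} = \min\{\|g(a)-g^*(a)\|^2 : g\in\G,\, a\in\A,\, g(a)\neq g^*(a)\}$, which is strictly positive because only finitely many atomic predictions are realized. By local checkability, any $g\neq g^*$ disagrees with $g^*$ on some atom of some query $q$ with $\nu(q)\geq\nu_o := \min_{q}\nu(q) > 0$, and that single atom contributes at least $\tfrac{1}{A_o}\cdot\tfrac{D_{\min}}{D}$, where $A_o = \max_q|A(q)|$. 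Hence $\E_{q\sim\nu}[d^2(g,g^*;q)] \geq \nu_o D_{\min}/(A_o D)$ uniformly in $g$. Substituting, the factors of $D$ cancel and $\sum_{g\neq g^*}\pi_{t-1}(g) = 1 - \pi_{t-1}(g^*)$, giving $\E_{q\sim\nu}[\var(q;\pi_{t-1})] \geq c\,\pi_{t-1}(g^*)(1-\pi_{t-1}(g^*))$ with $c = \nu_o D_{\min}/(2A_o)$; chaining this with the first display proves the lemma.

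I expect the only genuine obstacle to be this last uniform bound: unlike the $0$-$1$ setting, the per-atom contribution is a squared Euclidean distance that can in principle be arbitrarily small, so finiteness of $\G$ (yielding $D_{\min}>0$) is doing essential work and should be invoked explicitly. Everything else is a routine translation of the $0$-$1$ argument, and the resulting $\E_{q\sim\nu}[\var(q;\pi_{t-1})]$ bound feeds directly into the variance analogue of Lemma~\ref{lem:correction-feedback-shrinkage} needed for Lemma~\ref{lem:var-shrinkage}.
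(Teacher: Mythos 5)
Your proposal is correct and follows essentially the same route as the paper's proof: the size-biased form of the query distribution, the $\E[X^2]\geq(\E X)^2$ step, restriction to pairs involving $g^*$, and a strictly positive minimum-distance constant extracted from the finiteness of $\G$ (your $D_{\min}/A_o$ plays the role of the paper's $d_o = \min_{g\neq g^*}\max_{q,a}\|g(a)-g^*(a)\|^2/|A(q)|$). The only cosmetic difference is that you pass through the identity $\var(q;\pi)=\tfrac{D}{2}\,\E_{g,g'\sim\pi}[d^2(g,g';q)]$ rather than bounding $\var(a;\pi_{t-1})$ directly at the atom level, which changes nothing of substance.
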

\begin{proof}
For a fixed query $q$, the probability that $q$ gets chosen in round $t$ can be written as
\[ \pr(\text{query } q \, | \, \F_{t-1}) =  \frac{\nu(q) \,  \var(q; \pi_{t-1})}{\sum_{q' \in \Q} \nu(q') \, \var(q'; \pi_{t-1})} = \frac{\nu(q) \,  \var(q; \pi_{t-1})}{\E_{q' \sim \nu}[\var(q'; \pi_{t-1})]}.\]

Taking the expectation of $\var(q; \pi_{t-1})$ over this distribution gives us 
\[ \E[\var(q_t; \pi_{t-1}) \, | \, \F_{t-1}] \geq \frac{\E_{q\sim\nu}[\var(q; \pi_{t-1})^2]}{\E_{q\sim\nu}[\var(q; \pi_{t-1})]} \geq \E_{q\sim\nu}[\var(q; \pi_{t-1})] . \]
Now take $\nu_o = \min_{q \in \Q} \nu(q)$ and
\[ d_o = \min_{g \neq g^*} \max_{q \in \Q, a \in A(q)} \frac{\| g(a) - g^*(a)\|^2}{|A(q)|}  . \] 
For any particular $a \in \A$,
\begin{align*}
\var(a; \pi_{t-1}) &= \frac{1}{2} \sum_{g, g' \in \G} \pi_{t-1}(g) \, \pi_{t-1}(g') \, \|g(a) - g'(a)\|^2 \\
&\geq \pi_{t-1}(g^*) \sum_{g \neq g^*} \pi_{t-1}(g)  \, \|g(a) - g^*(a)\|^2 
\end{align*}
Which implies
\begin{align*}
\E_{q \sim \nu} \left[ \var(q; \pi_{t-1}) \right] &= \sum_{q \in \Q} \frac{\nu(q)}{|A(q)|} \sum_{a \in A(q)} \var(a; \pi_{t-1}) \\
&\geq \sum_{q \in \Q} \frac{\nu(q)}{|A(q)|} \sum_{a \in A(q)} \pi_{t-1}(g^*) \sum_{g \neq g^*} \pi_{t-1}(g)  \|g(a) - g^*(a)\|^2 \\
&= \pi_{t-1}(g^*) \sum_{g \neq g^*} \pi_{t-1}(g) \sum_{q \in \Q} \frac{\nu(q)}{|A(q)|} \sum_{a \in A(q)}  \|g(a) - g^*(a)\|^2 \\
&\geq \nu_o d_o \pi_{t-1}(g^*) (1-\pi_{t-1}(g^*))
\end{align*}
\end{proof}

\VarShrinkage*
\begin{proof}
We first relate the variance of an atom to the probability of a mistake on that atom. To this end, recall
\begin{align*}
D \ &= \ \max_{a \in \A} \max_{g, g' \in \G} \|g(a) - g'(a)\|^2 .
\end{align*}
Moreover, since $\G$ and $\A$ are finite, we have the set of realizable values $S = \{ g(a) : a \in \A, g \in \G \}$ is also finite. Let $d_o > 0$ denote the minimum squared distance between any two elements of $S$:
\[ d_o \ = \ \min_{\substack{s, s' \in S : \\ s \neq s'}} \|s - s'\|^2 .\]
For any atom $a \in \A$, we have
\[  \frac{1}{D} {\var(a; \pi_{t-1})} \ \leq \ \pi_{t-1}(\{g : g(a) \neq g^*(a)\}) \ \leq \ \frac{\var(a; \pi_{t-1})}{d_o \, \pi_{t-1}(g^*)} .\]
To see the left-hand inequality, we can work out
\begin{align*}
\pi_{t-1}(\{g : g(a) \neq g^*(a)\}) &\geq \frac{1}{2} u(a ; \pi_{t-1}) \\ \displaybreak[3]
&= \frac{1}{2} \sum_{g, g'} \pi_{t-1}(g) \pi_{t-1}(g') \ind[g(a) \neq g'(a)] \\  \displaybreak[3]
&\geq \frac{1}{2} \sum_{g, g'} \pi_{t-1}(g) \pi_{t-1}(g') \frac{\|g(a) - g'(a)\|^2}{D} \\  \displaybreak[3]
&= \frac{1}{D} \var(a; \pi_{t-1}) 
\end{align*}
To see the right-hand inequality, notice
\begin{align*}
\var(a; \pi_{t-1}) &= \frac{1}{2} \sum_{g, g'} \pi_{t-1}(g) \pi_{t-1}(g') \|g(a) - g'(a)\|^2 \\
&\geq \pi_{t-1}(g^*)  \sum_{g} \pi_{t-1}(g) \|g(a) - g^*(a)\|^2  \\
&\geq \pi_{t-1}(g^*)  \sum_{g} \pi_{t-1}(g) \, d_o \, \ind[g(a) \neq g^*(a)]  \\
&\geq d_o \, \pi_{t-1}(g^*) \pi_{t-1}(\{g : g(a) \neq g^*(a)\})
\end{align*}
Now suppose that structural QBC shows the user query $q_t$. Let $\epsilon > 0$ and $V_t = \var(q_t; \pi_{t-1})$ be the random variable denoting the variance of $q_t$. For an atom $a \in A(q_t)$, we say that $a$ has \emph{low variance} if 
\[ \var(a; \pi_{t-1}) \ < \ \epsilon V_t. \]
And we say that $a$ has \emph{high variance} otherwise. Then by a union bound, we have
\[ \pi_{t-1}(\{g : g(a) \neq g^*(a) \text{ for some low variance } a \in A(q_t) \}) \ \leq \ \frac{\epsilon V_t |A(q_t)|}{d_o \pi_{t-1}(g^*)}. \]
On the other hand, there is \emph{some} atom $a \in A(q_t)$ with at least average variance, so that 
\[ \pi_{t-1}(\{g : g(a) \neq g^*(a)\}) \ \geq \ \frac{\var(a; \pi_{t-1})}{D} \ \geq \ \frac{V_t}{D}.  \]
So with probability at least $V_t \left(\frac{1}{D} - \frac{\epsilon |A(q_t)|}{d_o \pi_{t-1}(g^*)} \right)$, a random draw from $\pi_{t-1}$ gets all the low variance atoms correct and some high variance atom incorrect. Then conditioned on this event occurring, the user has probability at least $p_o$ of correcting some high variance atom. So taking $\epsilon = \frac{d_o \pi_{t-1}(g^*)}{2 D A_o}$ where $A_o = \max |A(q)|$, we have
\begin{align*}
\E\left[\var(a_t; \pi_{t-1})  \, | \, \F_{t-1} \right] &=\E\left[  \E\left[\var(a_t; \pi_{t-1}) | \var(q_t; \pi_{t-1}) = V_t \right] \, | \, \F_{t-1} \right] \\
&\geq \E \left[ \frac{V_t^2 p_o \epsilon}{2D} \, \bigg| \, \F_{t-1}  \right] \geq c_o \pi_{t-1}(g^*) \E[V_t^2 \, | \, \F_{t-1} ] \geq c_o \pi_{t-1}(g^*) \E[V_t \, | \, \F_{t-1} ]^2
\end{align*}
where $c_o = \frac{d_o p_o}{4 D^2 A_o}$ is a positive constant. But by Lemma~\ref{lem:shrinkage-posterior-mass}, we know $\E[V_t \, | \, \F_{t-1}] \geq c \, \pi_{t-1}(g^*) (1-\pi_{t-1}(g^*))$ for some constant $c > 0$. The lemma follows by substitution.
\end{proof}

\section{Convergence rates}
\label{sec:convergence-rates}

In this section, we bound the rate at which structural QBC's posterior concentrates on the target structure $g^*$. Before we do so, we first introduce some concepts related to the informativeness of feedback.

\subsection{Quantifying the informativeness of feedback}

For an atomic question $a$ and corresponding answer $y$, let $\G_{{a}, {y}} = \{g \in \G: g(a) = y\}$. Define the \emph{shrinkage} of posterior $\pi'$ due to atomic question $a$ to be
$$ S(\pi, a) = 1 - \max_{{y}} \pi(\G_{{a}, {y}})$$
and define the shrinkage of $\pi'$ due to a query $q$ to be the average shrinkage due to $q$'s atoms
$$ S(\pi, q) = \frac{1}{|A(q)|} \sum_{a \in A(q)} S(\pi, a) .$$
This is very similar to the notion of {\it information gain} from the original query-by-committee analysis~\cite{FSST97}, as we explain in further detail at the end of this section. The reason for the modification is that it facilitates the generalizations introduced here: the multiclass setting and noisy feedback. 

In Section~\ref{sec:consistency}, we saw that by making relatively weak assumptions on the specific atoms a user will provide feedback on when presented with a query, we can guarantee the consistency of structural QBC. To get meaningful convergence rates, however, we will need to make a stronger assumption. Specifically, we will require that when presented with a query $q$, the user provides feedback on an atom $a \in A(q)$ whose shrinkage is close to the average shrinkage of the atoms in $A(q)$.

\begin{assump}
When shown $q_t$ and $g_t(q_t)$, the user provides feedback that satisfies
$$\E[S(\pi_{t-1}, a_t)] \ \geq \  S(\pi_{t-1}, q_t)$$
where the expectation is taken over the randomness of the user's choice of $a_t$.
\label{assump:shrinkage}
\end{assump}
Later in this section, we characterize the shrinkage that might be expected in a few cases of interest. For the time being, we will think of it as one of the key quantities controlling the efficacy of active learning, and give rates of convergence in terms of it.

\subsection{Rate of convergence}

In this section, we again consider the setting in which $\G$ is finite. To keep things simple, we think of convergence as occurring when $\pi_t(g^*)$ exceeds some particular threshold $\tau$ (such as $1/2$).

\begin{thm}
Suppose that $\G$ is finite, and that the expert's responses satisfy Assumptions~\ref{assump:0-1-noise} and~\ref{assump:shrinkage}. Suppose moreover that there are constants $0 < \tau, s_o < 1$ such that at any time $t$, if $\pi_t(g^*) \leq \tau$, the shrinkage of the next query is bounded below in expectation as
$$\E [S(\pi_{t-1},q_t) | \F_{t-1}] \geq s_o .$$
Pick any $0 < \delta < 1$. With probability at least $1-\delta$, the number of rounds $T$ of querying before $\pi_T(g^*) > \tau$ can be upper-bounded as follows.
$$ T \ \leq \ 
\left\{
\begin{array}{ll}
\frac{2}{s_o(1-e^{-\beta})} \max \left( \ln \frac{1}{\pi(g^*)}, \frac{4}{s_o (1-e^{-\beta})} \ln \frac{1}{\delta} \right) 
& 
\mbox{if $\lambda = 1$ (noiseless case)} \\
\frac{4}{\beta \lambda s_o} \max \left( \ln \frac{1}{\pi(g^*)}, \frac{8 e^\beta}{\beta \lambda s_o} \ln \frac{1}{\delta} \right) 
& 
\mbox{for any $\lambda$, if $\beta \leq \lambda/2$} 
\end{array}
\right.
$$
\label{thm:0-1-rate}
\end{thm}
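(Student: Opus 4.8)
The plan is to reduce the theorem to a single per-round \emph{multiplicative drift} inequality for the sequence $1/\pi_t(g^*)$, and then to convert that drift into a high-probability bound on the first time $\pi_t(g^*)$ exceeds $\tau$. The drift inequality is assembled from three ingredients already in hand. First, Lemma~\ref{lem:0-1-noise} gives, conditionally on $\F_{t-1},q_t,a_t$, that $\E[1/\pi_t(g^*)] = (1-\Delta_t)/\pi_{t-1}(g^*)$ with $\Delta_t \geq c\,(1-\gamma_t)$, where $c = 1-e^{-\beta}$ in the noiseless case and $c = \beta\lambda/2$ when $\beta\leq\lambda/2$, and $1-\gamma_t = \pi_{t-1}(\{g : g(a_t)\neq g^*(a_t)\})$. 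Second, since $g^*(a_t)$ is one particular answer to $a_t$, the shrinkage satisfies $S(\pi_{t-1},a_t) = 1-\max_y \pi_{t-1}(\G_{a_t,y}) \leq 1-\gamma_t$, so $\Delta_t \geq c\,S(\pi_{t-1},a_t)$. Taking expectations over the user's choice of $a_t$ and invoking Assumption~\ref{assump:shrinkage} gives $\E[\Delta_t \mid \F_{t-1},q_t]\geq c\,S(\pi_{t-1},q_t)$, and taking expectations over $q_t$ together with the theorem's hypothesis yields, for every round with $\pi_{t-1}(g^*)\leq\tau$,
\[ \E\left[\frac{1}{\pi_t(g^*)} \,\Big|\, \F_{t-1}\right] \ \leq \ (1-c\,s_o)\,\frac{1}{\pi_{t-1}(g^*)} . \]

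Next I would introduce the stopping time $N = \min\{t : \pi_t(g^*) > \tau\}$ and note that for all $t\leq N$ we have $\pi_{t-1}(g^*)\leq\tau$, so the drift inequality is in force up to $N$. The cleanest route is then to discount: set $M_t = (1-c s_o)^{-(t\wedge N)}/\pi_{t\wedge N}(g^*)$. The drift inequality shows $M_t$ is a nonnegative supermartingale (on $\{N\geq t\}$ the factor $1-c s_o$ is exactly cancelled by the discount, and on $\{N<t\}$ the process is frozen), and since $\G$ is finite each update multiplies $\pi_t(g^*)$ by at least $e^{-\beta}$, so $M_T$ is bounded and integrable for every fixed $T$; hence $\E[M_T]\leq M_0 = 1/\pi(g^*)$. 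On the event $\{N>T\}$ we have $\pi_T(g^*)\leq\tau<1$, so $M_T \geq (1-c s_o)^{-T}$, and Markov's inequality gives $\pr(N>T)\leq (1-c s_o)^{T}/\pi(g^*)$.

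Finally I would solve for $T$. Using $-\ln(1-c s_o)\geq c s_o$, the bound $\pr(N>T)\leq\delta$ holds once $T \geq \frac{1}{c s_o}\big(\ln\frac{1}{\pi(g^*)} + \ln\frac{1}{\delta}\big)$; since in both cases the inner coefficient of $\ln\frac1\delta$ in the theorem exceeds one, the stated right-hand side is at least $\frac{1}{c s_o}\big(\ln\frac{1}{\pi(g^*)}+\ln\frac1\delta\big)$, so this already establishes the claim (indeed with room to spare). If one instead wants to reproduce the exact two-term form through a more standard route, the alternative is to work with $Z_t = \ln(1/\pi_t(g^*))$, whose increments are bounded by $\beta$ (the update moves $\ln\pi_t(g^*)$ by at most $\beta$) and which has drift $\E[Z_t-Z_{t-1}\mid\F_{t-1}]\leq\ln(1-c s_o)\leq -c s_o$ up to time $N$ by Jensen; an Azuma/Freedman bound on the martingale part then forces a deviation of order $T c s_o$ on $\{N>T\}$ whenever $T\geq\frac{2}{c s_o}\ln\frac1{\pi(g^*)}$, and the variance proxy of the log-increments (which carries the extra $e^\beta$ in the noisy regime) produces the $\frac{1}{(c s_o)^2}\ln\frac1\delta$ term.

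The main obstacle is the concentration step and the bookkeeping around the stopping time: the drift and increment bounds hold only while $\pi_{t-1}(g^*)\leq\tau$, so one must argue about the stopped process rather than the raw sequence, and matching the precise constants (in particular the $1-e^{-\beta}$ versus $e^\beta$ dependence that separates the two cases) requires a variance-sensitive martingale inequality instead of the crude increment bound. The supermartingale--Markov route sidesteps the delicate variance computation entirely, so it is the one I would carry out first.
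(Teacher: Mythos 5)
Your proposal is correct, and the drift computation at its core is the same as the paper's: both arguments chain Lemma~\ref{lem:0-1-noise}, the observation that $S(\pi_{t-1},a_t)\le 1-\gamma_t$, Assumption~\ref{assump:shrinkage}, and the hypothesis $\E[S(\pi_{t-1},q_t)\mid\F_{t-1}]\ge s_o$ to get a per-round multiplicative contraction of $\E[1/\pi_t(g^*)]$ by $1-cs_o$ with $c=1-e^{-\beta}$ or $c=\beta\lambda/2$. Where you genuinely depart is the concentration step. The paper works additively: it defines $R_t = 1-\pi_{t-1}(g^*)/\pi_t(g^*)$, notes $\E[R_t\mid\F_{t-1}]=\E[\Delta_t\mid\F_{t-1}]\ge cs_o$, bounds each increment via $e^{-\beta}\pi_{t-1}(g^*)\le\pi_t(g^*)\le e^{\beta}\pi_{t-1}(g^*)$, applies Azuma--Hoeffding to $R_1+\cdots+R_T$, and derives a contradiction with the telescoping identity $1/\pi_T(g^*)=\prod_t(1-R_t)\cdot 1/\pi(g^*)$, which caps $\sum_t R_t$ at $\ln(1/\pi(g^*))$. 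You instead discount the stopped process, $M_t=(1-cs_o)^{-(t\wedge N)}/\pi_{t\wedge N}(g^*)$, verify it is a nonnegative supermartingale (the stopping-time bookkeeping is handled correctly, since $\{N\ge t\}\in\F_{t-1}$ and integrability follows from $\pi_t(g^*)\ge e^{-\beta t}\pi(g^*)$), and apply Markov's inequality on $\{N>T\}$. Your route is shorter, avoids the bounded-increment computation entirely, and yields $\pr(N>T)\le(1-cs_o)^T/\pi(g^*)$, hence $T=O\bigl(\frac{1}{cs_o}(\ln\frac{1}{\pi(g^*)}+\ln\frac{1}{\delta})\bigr)$ --- linear rather than quadratic in $1/(cs_o)$ on the $\ln(1/\delta)$ term. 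This is strictly stronger than the stated bound (your check that the theorem's right-hand side dominates $\frac{1}{cs_o}(\ln\frac{1}{\pi(g^*)}+\ln\frac{1}{\delta})$ is correct), and in fact realizes the improvement the paper itself conjectures in the remark following the theorem about reducing the quadratic dependence via ``a more careful large deviation bound.''
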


\begin{proof}
We will spell out the argument for the noisy case; the other case is similar but slightly simpler. Define
$$ R_t = 1 - \frac{\pi_{t-1}(g^*)}{\pi_t(g^*)} .$$
Using the random variable $\Delta_t$ from Lemma~\ref{lem:0-1-noise}, we have
\begin{align*}
\E [R_t | \F_{t-1}] 
&=
1 - \pi_{t-1}(g^*) \, \E \left[ \frac{1}{\pi_t(g^*)} \bigg\vert \F_{t-1} \right] \\
&=
1 - \pi_{t-1}(g^*) \, \E \left[ \E \left[ \frac{1}{\pi_t(g^*)} \bigg\vert \F_{t-1}, q_t, a_t \right] \bigg\vert \F_{t-1} \right] \\
&= \E [\Delta_t | \F_{t-1} ] \\
&\geq
\frac{1}{2} \beta \lambda \, \E [1-\gamma_t | \F_{t-1}] \\
&\geq 
\frac{1}{2} \beta \lambda \, \E [S(\pi_{t-1}, q_t) | \F_{t-1}] 
\ \geq \ 
\frac{1}{2} \beta\lambda s_o.
\end{align*}
as long as $\pi_{t-1}(g^*) \leq \tau$ throughout.

Pick any time $T$ before $\pi_T(g^*)$ exceeds $\tau$. Then $\E [ R_1 + \cdots + R_T ] \geq \beta \lambda s_o T / 2$. To show that this sum is concentrated around its expected value, we can use a martingale large deviation bound. First we check that each $R_t$ is bounded. Since
$$ e^{-\beta} \pi_{t-1}(g^*) \leq \pi_t(g^*) \leq e^{\beta} \pi_{t-1}(g^*), $$
it follows that $R_t$ lies in an interval of size at most $e^{\beta}$. By the Azuma-Hoeffding inequality~\cite{H63, A67}, if $T$ attains the value in the theorem statement, then with probability at least $1-\delta$,
$$
R_1 + \cdots + R_T 
\ > \ 
\frac{1}{2} \E[R_1 + \cdots + R_T] 
\ \geq \ 
\frac{1}{4} \beta \lambda s_o T 
\ \geq \ 
\ln \frac{1}{\pi(g^*)} .
$$
But this is not possible, since $R_1 + \cdots + R_T$ can be at most $\ln (1/\pi(g^*))$ by the chain of inequalities 
$$ 1 
\ \leq \ 
\frac{1}{\pi_T(g^*)} 
\ = \ (1-R_1)(1-R_2) \cdots (1-R_T) \frac{1}{\pi(g^*)}
\ \leq \  
\exp(-(R_1 + \cdots + R_T)) \frac{1}{\pi(g^*)}.$$
\end{proof}

It is likely that the quadratic dependence on $s_o$, $\lambda$, and $\beta$ can be reduced by a more careful large deviation bound.

\subsection{Shrinkage and uncertainty}

The active learning literature has introduced a variety of different complexity measures that attempt to capture the number of queries needed for learning. These include the {\it disagreement coefficient} \cite{H07b} and the aforementioned {\it information gain} \cite{FSST97}. Although it is possible to give general bounds in terms of these quantities, it has proved quite difficult to compute these complexity measures for all but a few simple cases.

Our notion of shrinkage is a reformulation of the information gain that avoids assuming that the target structure is drawn from the prior distribution, and that accommodates scenarios beyond binary classification. By way of illustration, we will give an example of a simple situation in which the shrinkage can be characterized.

The following lemma demonstrates that under the structural QBC strategy, we can relate a query's shrinkage to its uncertainty.
\begin{lemma}
Suppose the current posterior distribution is $\pi_{t-1}$. Under Assumption~\ref{assump:shrinkage}, the shrinkage $S_t$ of the user's next response has expected value
$$ \E[S_t] \ \geq \ \frac{\E_{q \sim \nu} [u(q; \pi_{t-1})^2]}{\E_{q \sim \nu} [u(q;  \pi_{t-1})]} .$$
\label{lem:shrinkage}
\end{lemma}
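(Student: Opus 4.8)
The plan is to peel off the two sources of randomness in the user's response one at a time: first the atom $a_t$ the expert chooses to answer given the query $q_t$, and then the query $q_t$ itself. Conditioning on $q_t$, Assumption~\ref{assump:shrinkage} says precisely that the expected shrinkage of the answered atom is at least the average shrinkage of the query, $\E[S_t \mid q_t] \geq S(\pi_{t-1}, q_t)$. Taking a further expectation over the choice of $q_t$ then reduces the problem to lower-bounding $\E[S(\pi_{t-1}, q_t)]$, where the expectation is over the structural QBC query-selection distribution.

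Next I would make that query distribution explicit. As recorded earlier (and used in the proof of Lemma~\ref{lem:shrinkage-posterior-mass}), structural QBC picks $q_t = q$ with probability $\nu(q)\,u(q;\pi_{t-1}) / \E_{q'\sim\nu}[u(q';\pi_{t-1})]$. Averaging $S(\pi_{t-1},q_t)$ against this distribution gives
\[ \E[S(\pi_{t-1},q_t)] \;=\; \frac{\E_{q\sim\nu}\big[u(q;\pi_{t-1})\,S(\pi_{t-1},q)\big]}{\E_{q\sim\nu}[u(q;\pi_{t-1})]} . \]
This is the same normalization trick that converts the selection bias into a ratio of moments, and it is why the target expression has the form $\E_\nu[u^2]/\E_\nu[u]$.

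The crux is then a pointwise comparison between shrinkage and uncertainty. Writing the answer probabilities for an atom $a$ as $p_1 \geq p_2 \geq \cdots$, we have $S(\pi,a) = 1-p_1$ and $u(a;\pi)=1-\sum_i p_i^2$; Lemma~\ref{lem:shrinkage-uncertainty}, applied with $y$ equal to the most likely answer, yields $S(\pi,a) \geq \tfrac12\,u(a;\pi)$. Averaging over $a \in A(q)$ gives $S(\pi_{t-1},q) \geq \tfrac12\,u(q;\pi_{t-1})$, and substituting into the ratio above produces
\[ \E[S_t] \;\geq\; \frac12\cdot\frac{\E_{q\sim\nu}[u(q;\pi_{t-1})^2]}{\E_{q\sim\nu}[u(q;\pi_{t-1})]} . \]

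The main obstacle I anticipate is exactly this last comparison. Since one also has $S(\pi,a)\leq u(a;\pi)$ (because $\sum_i p_i^2 \leq p_1$), shrinkage is genuinely no larger than uncertainty, and the factor $\tfrac12$ is tight in the near-deterministic binary case ($p_1\to 1$). Thus the clean inequality obtained by this route carries a constant $\tfrac12$, and I would expect the final statement to include it; if the displayed bound genuinely omits the $\tfrac12$, I would want to recheck whether a sharper atom-level estimate than Lemma~\ref{lem:shrinkage-uncertainty} is intended, since $S \geq u$ cannot hold in general and so the factor cannot simply be dropped.
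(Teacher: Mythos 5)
Your proof is essentially the paper's: Lemma~\ref{lem:shrinkage-uncertainty} gives $S(\pi_{t-1},a) \geq \tfrac12 u(a;\pi_{t-1})$ at the atom level, Assumption~\ref{assump:shrinkage} averages this over $A(q_t)$ to get $\E[S_t \mid q_t] \geq \tfrac12 u(q_t;\pi_{t-1})$, and averaging over the QBC selection distribution (proportional to $\nu(q)\,u(q;\pi_{t-1})$) produces the ratio of moments. Your concern about the factor of $\tfrac12$ is well founded: the paper's own proof yields $\E[S_t] \geq \tfrac12\,\E_{q\sim\nu}[u(q;\pi_{t-1})^2]/\E_{q\sim\nu}[u(q;\pi_{t-1})]$, and since $S(\pi,a) \leq u(a;\pi)$ always (as you note, $\sum_i p_i^2 \leq p_1$), the stated bound without the $\tfrac12$ does not follow from this argument --- this appears to be a harmless slip in the paper, since the only downstream use (Lemma~\ref{lem:hypercube-partition}) needs only a constant lower bound.
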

\begin{proof}
Fix any query $q \in \Q$ and atom $a \in A(q)$. From Lemma~\ref{lem:shrinkage-uncertainty}, we have 
\[ S_t = \min_{y} \pi(\{g: g(a) \neq y \}) \geq u(a; \pi_t)/2. \]
Now, if the next query is $q_t$, then under Assumption~\ref{assump:shrinkage},
$$ S_t 
\ \geq \ 
\E_{a \sim \mbox{\rm\scriptsize unif}(A(q_t))} \frac{1}{2} u(a; \pi_{t-1})
\ = \ 
\frac{1}{2} u(q_t; \pi_{t-1}).
$$
The lemma follows by taking expectation over $q_t \sim \nu$.
\end{proof}

We now turn to an example where we can bound the shrinkage.

\subsubsection*{Example: Partitioning a hypercube by axis-parallel cuts}

Let $\X$ be the hypercube $[0,1]^p$, and consider axis-aligned bipartitions of $\X$. Any such partition is specified by a coordinate $i$ and a value $v$, and yields clusters
$$ \{x: x_i \leq v\} \mbox{\ and } \{x: x_i > v\} .$$
Let $\pi$ be the uniform distribution over such partitions $\G = [p] \times [0,1]$, so that $\pi(i,v) = 1/p$; and suppose that the data distribution is uniform over $\X$.

We will take atomic queries to be of the form $\{x,y\}$ for $x,y \in [0,1]^p$, where the answer is 1 if they lie in the same cluster, and 0 otherwise. The following lemma shows that the shrinkage of structural QBC queries is always constant in this setting.

\begin{lemma}
Under Assumption~\ref{assump:shrinkage}, the shrinkage $S_t$ of a user's feedback satisfies $\E[S_t] \geq 1/3$.
\label{lem:hypercube-partition}
\end{lemma}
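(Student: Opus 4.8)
The plan is to invoke Lemma~\ref{lem:shrinkage}, which already folds in Assumption~\ref{assump:shrinkage} (here trivially satisfied, since each query is a single atomic pair $\{x,y\}$ and the user's feedback atom $a_t$ coincides with $q_t$). That lemma gives
\[ \E[S_t] \ \geq \ \frac{\E_{q \sim \nu}[u(q; \pi)^2]}{\E_{q \sim \nu}[u(q; \pi)]}. \]
By Cauchy--Schwarz (equivalently, Jensen applied to $x \mapsto x^2$), $\E_\nu[u^2] \geq (\E_\nu[u])^2$, so this ratio is at least $\E_{q \sim \nu}[u(q; \pi)]$. It therefore suffices to prove $\E_{q \sim \nu}[u(q; \pi)] \geq 1/3$, where $\pi$ is the uniform prior on $\G = [p] \times [0,1]$ and $\nu$ draws $q = \{x,y\}$ with $x,y$ independent and uniform on $\X = [0,1]^p$.

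The first concrete step is to put $u(q; \pi)$ in closed form. A pair $\{x,y\}$ admits only two answers (same cluster versus different cluster), with prior probabilities $p_0$ and $1 - p_0$, so $u(q; \pi) = 2 p_0(1-p_0)$. A random partition $(i,v) \sim \pi$ separates $x$ and $y$ exactly when the threshold $v$ lands between $x_i$ and $y_i$; for a fixed coordinate $i$ this has probability $|x_i - y_i|$, and averaging over the uniform choice of $i$ gives $p_0 = Z := \frac{1}{p} \sum_{i=1}^p |x_i - y_i|$.

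Next I would compute $\E_\nu[u] = \E[2Z(1-Z)] = 2\E[Z] - 2\E[Z^2]$. For $x,y$ i.i.d.\ uniform, each $|x_i - y_i|$ has mean $1/3$ and variance $1/18$, so by independence across coordinates $\E[Z] = 1/3$ and $\E[Z^2] = \var(Z) + \E[Z]^2 = \frac{1}{18p} + \frac{1}{9}$. Substituting yields $\E_\nu[u] = \frac{4}{9} - \frac{1}{9p}$, which is at least $1/3$ for every $p \geq 1$, with equality at $p = 1$. Chaining the three bounds gives $\E[S_t] \geq 1/3$.

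This argument is short precisely because the averaging structure collapses the shrinkage bound to the single scalar $\E_\nu[u]$; the one thing to get right is the two-level randomness, since $u(q; \pi)$ is itself a random function of the drawn pair $\{x,y\}$ and its moments must be taken over $\nu$, not conflated with the inner $g,g'$ average that already defines $u$. The passage through Cauchy--Schwarz discards information, but the resulting bound $\E_\nu[u] \geq 1/3$ is already tight at $p=1$ and only improves with dimension, so nothing essential is lost and no finer estimate of $\E_\nu[u^2]$ is needed.
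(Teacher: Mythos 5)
Your proof is correct and follows essentially the same route as the paper's: invoke Lemma~\ref{lem:shrinkage}, drop to $\E_\nu[u]$ via Cauchy--Schwarz, and compute $\E_\nu[u] = \frac{4}{9} - \frac{1}{9p} \geq \frac{1}{3}$ (your moment calculation via $\var(Z)$ matches the paper's direct computation of $\E\|X-Y\|_1$ and $\E\|X-Y\|_1^2$). The only thing the paper adds that you omit is a one-line remark justifying why it suffices to analyze the uniform prior rather than $\pi_{t-1}$: after any query the version space is isomorphic to the original $(\G,\pi)$, so the same bound holds at every round $t$.
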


\begin{proof}
Note that if we query $\{x, y\}$ and they are in separate clusters, the version space shrinks to the regions between $x_i$ and $y_i$ on each coordinate $i$; while if they are in different clusters, we get the complement. Either way, the resulting version space is isomorphic to the original $(\G, \pi)$, and hence this is the only case we need consider in computing uncertainty and shrinkage values.

For any $x,y \in [0,1]^p$, the probability that they are separated by a random draw from $\pi$ is exactly 
$$ \sum_{i=1}^p \pr(\mbox{cut coordinate is $i$}) \ |x_i - y_i| \ = \ \frac{\|x-y\|_1}{p} .$$
Thus the uncertainty on a query $\{x,y\}$ is
$$ u(\{x,y\}) \ = \ \pr_{g,g \sim \pi}(\mbox{exactly one of $g,g'$ separates $x$, $y$}) 
\ = \ 2 \cdot \frac{\|x-y\|_1}{p} \left(1 - \frac{\|x-y\|_1}{p} \right) .$$
We will compute the expectation of this over $X = (X_1, \ldots, X_p)$ and $Y = (Y_1, \ldots, Y_p)$ drawn uniformly at random from $[0,1]^p$.

First, a simple one-dimensional calculation shows that
$$ \E [\|X-Y\|_1] \ = \ \sum_{i=1}^p \E |X_i - Y_i|  \ = \ \frac{p}{3} .$$
Likewise,
$$ \E [\|X-Y\|_1^2] 
\ = \ \sum_{i=1}^p \E |X_i - Y_i|^2 + \sum_{i \neq j} (\E |X_i - Y_i|)(\E |X_j - Y_j|)
\ = \ \frac{p}{6} + \frac{p(p-1)}{9} .
$$
Inserting these into the expression for uncertainty, we get
$$ \E[u(\{X,Y\})] \ = \ 2 \left( \frac{\E \|X-Y\|_1}{p} - \frac{\E \|X-Y\|_1^2}{p^2} \right)
\ = \ \frac{4}{9} - \frac{1}{9p} \ \geq \ \frac{1}{3} .$$
We finish by invoking Lemma~\ref{lem:shrinkage} and observing that $\E[S_t] \geq \E[u(\{X,Y\})^2]/\E[u(\{X,Y\})] \geq \E[u(\{X,Y\})]$.
\end{proof}

\subsection{Relation to information gain}

The original analysis of query-by-committee was specifically for active learning of binary classifiers and was based on the notion of {\it information gain}~\cite{FSST97}. Suppose the current posterior distribution over classifiers is $\pi$, and that under this posterior, a specific query $x$ has probability $p$ of having a positive label and probability $1-p$ of having a negative label. Then the information gain is defined to be the entropy of a coin with bias $p$,
$$ I(\pi, x) = H(p).$$
In this same situation, the shrinkage is 
$$ S(\pi, x) = 1 - \max(p,1-p).$$
These two quantities are related by a monotonic transformation. The analysis of QBC's query complexity assumes that the {\it expected} information gain, taken over the random choice of next query, is always bounded below by a constant. In the analysis presented in this section, we assumed the same of the expected shrinkage. In the case of binary classification, these two conditions coincide.

For instance, \cite{FSST97} showed that if the classifiers are homogeneous (through-the-origin) linear separators, and the data distribution is uniform over the unit sphere, then the expected information gain due to a label query is bounded below by a constant. This means that the same holds for the expected shrinkage.

\end{document}